\documentclass{article}

\usepackage{graphicx}       
\usepackage[utf8]{inputenc}
\usepackage[T1]{fontenc}
\usepackage{hyperref}       
\hypersetup{
	pdftitle={Linear--Nonlinear Fusion Neural Operator for Partial Differential Equations},
	pdfauthor={Heng Wu, Junjie Wang, Benzhuo Lu},
	pdfkeywords={Neural operators, PDE, Machine Learning},
	pdfsubject={Mathematics, Machine Learning}
}
\usepackage{xcolor}         
\usepackage{amsmath,amssymb,amsfonts}       
\usepackage{booktabs}       
\usepackage{multirow}
\usepackage{pdfpages}
\usepackage{authblk}
\usepackage{caption}
\usepackage{geometry} 
\renewcommand\thetable{\arabic{table}}

\pdfoutput=1

\usepackage{amsthm}
\newtheorem{theorem}{Theorem}[section]

\title{Linear--Nonlinear Fusion Neural Operator for Partial Differential Equations}

\author[1,2]{Heng Wu}
\author[1,2]{Junjie Wang}
\author[1,2]{Benzhuo Lu\thanks{Corresponding author. E-mail: \texttt{bzlu@lsec.cc.ac.cn}}}

\affil[1]{SKLMS, ICMSEC, NCMIS, Academy of Mathematics and Systems Science, Chinese Academy of Sciences, Beijing 100190, China}
\affil[2]{School of Mathematical Sciences, University of Chinese Academy of Sciences, Beijing 100049, China}

\date{}

\begin{document}
	
	\maketitle
	
\begin{abstract}
	Neural operator learning directly constructs the mapping relationship from the equation parameter space to the solution space, enabling efficient direct inference in practical applications without the need for repeated solution of partial differential equations (PDEs)---an advantage that is difficult to achieve with traditional numerical methods.
	In this work, we find that explicitly decoupling linear and nonlinear effects within such operator mappings leads to improved learning efficiency.
	This yields a novel network structure, namely the Linear--Nonlinear Fusion Neural Operator (LNF-NO), which models operator mappings via the multiplicative fusion of a linear component and a nonlinear component, thus achieving a lightweight and interpretable representation.
	This linear--nonlinear decoupling enables efficient capture of complex solution features at the operator level while maintaining stability and generality.
	LNF-NO naturally supports multiple functional inputs and is applicable to both regular grids and irregular geometries.
	Across a diverse suite of PDE operator-learning benchmarks, including nonlinear Poisson--Boltzmann equations and multi-physics coupled systems, LNF-NO is typically substantially faster to train than several representative neural operator baselines, while achieving comparable or improved accuracy across most tested cases.
	On the tested 3D Poisson--Boltzmann case, LNF-NO achieves strong accuracy while requiring substantially less training time than the three-dimensional Fourier Neural Operator and Transolver baselines.
\end{abstract}

\section{Introduction}
\label{sec:intro}

Neural operator learning has emerged as a central paradigm in scientific machine learning for approximating solution operators of partial differential equations (PDEs).
By learning mappings between function spaces, neural operators aim to amortize the cost of repeatedly solving PDEs under varying boundary conditions, source terms, or coefficients.
Such many-query settings arise ubiquitously in physical, chemical, and biological modeling, where repeated numerical solves constitute a major computational bottleneck \cite{Quarteroni2015, Benner2015, Willcox2002}.
Neural operators provide fast surrogate models that deliver approximate PDE solutions with accuracy often sufficient for downstream analysis and decision-making, while substantially reducing computational cost compared with repeated high-fidelity simulations \cite{Lu_2021, Kovachki2023JMLR}.

Among existing approaches, the Deep Operator Network (DeepONet) \cite{Lu_2021} and the Fourier Neural Operator (FNO) \cite{li2021fourier} represent the most widely used and representative neural operator models.
DeepONet employs a branch--trunk decomposition to learn operator mappings from input functions to solution fields, while FNO leverages global convolution in the Fourier domain to efficiently capture long-range dependencies on regular grids.
Beyond these baselines, a growing family of neural operator architectures has been proposed, including geometry-informed operators \cite{Li2023GINO}, the U-shaped Neural Operator (UNO) \cite{Rahman2023UNO}, and transformer-inspired or attention-based formulations such as Transolver \cite{Wu2024Transolver} and PDEformer \cite{Ye2024PDEformer}.

\begin{figure*}[t]
	\centering
	\includegraphics[width=0.9\linewidth]{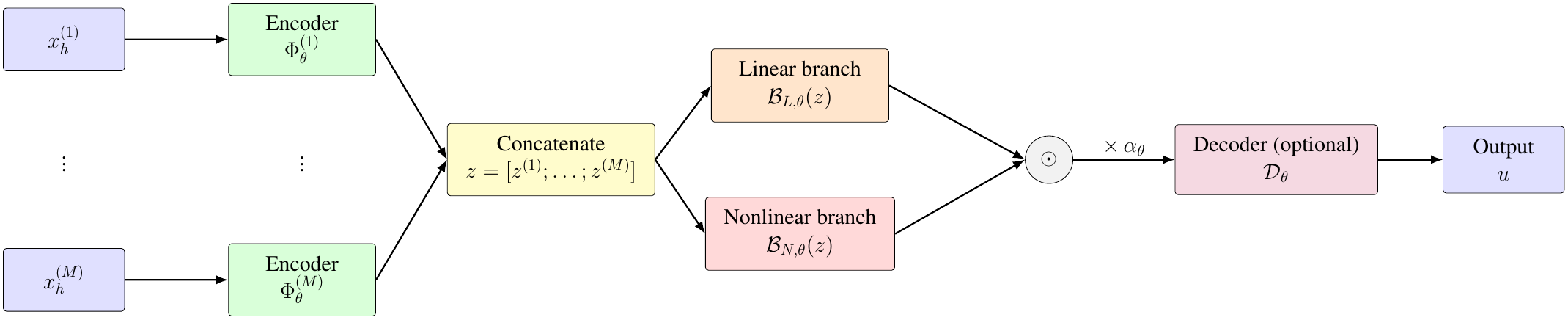}
	\caption{
		Architecture of the proposed Linear--Nonlinear Fusion Neural Operator (LNF-NO).
		Each input component (typically a discretized function, e.g., boundary traces or source fields) is encoded separately and concatenated into a latent representation.
		An operator core then fuses a linear branch and a nonlinear branch via element-wise multiplication ($\odot$), producing a raw prediction that can be optionally refined by a lightweight decoder.
		Arrows indicate the data flow.
	}
	\label{fig:lnfno_arch}
\end{figure*}

Despite this progress, several challenges remain in practical operator-learning settings.
First, nonlinear interactions in PDE operators can be difficult to represent or optimize using purely additive architectural compositions.
Second, many real-world problems involve \emph{multiple functional inputs}, such as boundary traces, source terms, or coefficient fields, as well as coupled multi-field outputs.
Third, training efficiency and optimization stability become increasingly important as problem complexity grows, particularly for three-dimensional PDEs and large-scale benchmarks.

Non-additive interactions have proven effective across a wide range of deep learning architectures.
Feature-wise modulation mechanisms, such as Feature-wise Linear Modulation (FiLM), apply learned, input-dependent scaling to intermediate representations and can enhance expressivity with minimal computational overhead \cite{perez2018film}.
More broadly, attention mechanisms \cite{vaswani2017attention} and mixture-of-experts models \cite{shazeer2017outrageously} illustrate that structured, conditional interactions between multiple components can improve representational capacity and training efficiency.
These developments motivate exploring multiplicative fusion as a general design principle for neural operator architectures.

Our architectural design is further motivated by numerical insights into semi-linear elliptic PDEs and coupled systems.
For instance, in biomolecular electrostatics and ion transport, solutions to nonlinear equations such as Poisson--Boltzmann or Poisson--Nernst--Planck are often viewed as being predominantly governed by their corresponding linear approximations, modulated by nonlinear correction terms \cite{lu2008recent, zhang2020model}.
Inspired by this, we hypothesize that explicitly disentangling these linear and nonlinear effects within the neural architecture serves as a useful physics-aware inductive bias.

In this work, we propose the Linear--Nonlinear Fusion Neural Operator (LNF-NO), which models operator behavior through the multiplicative interaction of a linear component and a nonlinear component.
LNF-NO treats these components symmetrically: the two branches interact directly through element-wise fusion, rather than being combined through hierarchical or purely additive structures.
This design provides a lightweight and interpretable mechanism for capturing nonlinear effects while maintaining architectural simplicity and stable optimization behavior.
LNF-NO naturally supports multiple functional inputs by encoding each input function independently and fusing their latent representations within the operator core.
The architecture can be paired with a lightweight grid-based decoder for standard benchmark problems, or used in a decoder-free setting for irregular geometries where solutions are predicted at sampled points.

We evaluate LNF-NO on a diverse suite of PDE operator-learning benchmarks, including nonlinear Poisson--Boltzmann equations, coupled multi-field systems, and domains with irregular geometries.
Across these tasks, LNF-NO consistently improves training efficiency relative to representative neural operator baselines, while achieving comparable or improved accuracy across most tested cases.
In particular, we present a three-dimensional Poisson--Boltzmann case study in which LNF-NO achieves strong accuracy while requiring substantially less training time than the three-dimensional Fourier Neural Operator and Transolver baselines.

The main contributions of this work are summarized as follows:
\begin{itemize}
	\item We introduce a linear--nonlinear multiplicative fusion mechanism that explicitly decouples complementary effects. This design acts as an efficiency-oriented inductive bias, yielding substantially improved training efficiency on a broad range of benchmarks while achieving comparable or improved accuracy across most tested cases.
	
	\item Our simple and flexible architecture naturally accommodates multiple functional inputs and predicts multiple coupled output fields within a unified framework, seamlessly generalizing across both regular grids and irregular geometries while preserving the same linear--nonlinear fusion core.
	
	\item We demonstrate the scalability and robustness of LNF-NO on a diverse suite of benchmarks, including three-dimensional systems and strongly nonlinear equations (e.g., Poisson--Boltzmann), where the proposed architecture maintains stable optimization and strong empirical performance.
\end{itemize}
	
\section{Methodology}
\label{sec:methods}

\subsection{Operator learning with input/output discretization}

Let $\mathcal{G}:\mathcal{X}\to\mathcal{Y}$ denote the solution operator of a PDE posed on a bounded domain
$\Omega\subset\mathbb{R}^d$, mapping input functions (e.g., boundary traces, coefficient fields, or source terms)
to solution fields.
In practice, operator learning is carried out after discretizing \emph{both} inputs and outputs.

\textbf{Input discretization.}
We consider \emph{multi-function} inputs with $M$ components
\[
x = \big(x^{(1)},\dots,x^{(M)}\big),\qquad x^{(m)}\in\mathcal{X}^{(m)}.
\]
Each component is discretized by a sampling or projection operator
$\mathcal{I}^{(m)}_h:\mathcal{X}^{(m)}\to\mathbb{R}^{d^{(m)}_{\mathrm{in}}}$,
yielding $x_h^{(m)}:=\mathcal{I}^{(m)}_h(x^{(m)})$.
Stacking all components gives $x_h:=\mathcal{I}_h(x)\in\mathbb{R}^{d_{\mathrm{in}}}$,
where $d_{\mathrm{in}}=\sum_{m=1}^M d^{(m)}_{\mathrm{in}}$.
In general, $\mathcal{I}_h$ is not invertible.

\textbf{Data-generating family and lifting.}
The dataset is generated from a restricted family $\mathcal{A}\subset\mathcal{X}$.
Accordingly, we only require the discrete learning target to be well defined on the set of discrete inputs observed in data.
Let $K_h:=\mathcal{I}_h(\mathcal{A})\subset\mathbb{R}^{d_{\mathrm{in}}}$.
We fix a (possibly non-unique) selection map
$\mathcal{L}_h:K_h\to\mathcal{A}$ such that
$\mathcal{I}_h(\mathcal{L}_h(x_h))=x_h$ for all $x_h\in K_h$.
The choice of $\mathcal{L}_h$ reflects the data-generation procedure and does not require uniqueness.

\textbf{Output discretization.}
Let $\mathcal{I}^{\mathrm{out}}_h:\mathcal{Y}\to\mathbb{R}^{d_{\mathrm{out}}}$ denote an output sampling or projection map,
such as evaluation on a fixed grid or a fixed point set for irregular domains.
For $C_{\mathrm{out}}$ output fields, typically $d_{\mathrm{out}}=C_{\mathrm{out}}N_{\mathrm{out}}$.

\textbf{Discrete target operator.}
We define the discrete operator learning target
\begin{equation}
	F_h^\star := \mathcal{I}^{\mathrm{out}}_h \circ \mathcal{G} \circ \mathcal{L}_h,
	\qquad
	F_h^\star:K_h\to\mathbb{R}^{d_{\mathrm{out}}}.
	\label{eq:discrete_target}
\end{equation}
All models are trained and evaluated in the discrete output space $\mathbb{R}^{d_{\mathrm{out}}}$.

\subsection{LNF-NO architecture (multi-input, multi-output)}

Fig.~\ref{fig:lnfno_arch} overviews the LNF-NO architecture, generally formulated to handle multiple functional inputs (e.g., boundary conditions and source terms) and predict multiple coupled output fields (multi-output).
Given $x_h=\big(x_h^{(1)},\dots,x_h^{(M)}\big)$, each input component is encoded independently:
\begin{align*}
	z^{(m)} &= \Phi^{(m)}_\theta\!\left(x_h^{(m)}\right)\in\mathbb{R}^{d_m}, \qquad m=1,\dots,M, \\
	z &= \big[z^{(1)};\ldots;z^{(M)}\big]\in\mathbb{R}^{d_z}.
\end{align*}

To balance expressivity and optimization efficiency, the operator core consists of a linear branch and a nonlinear branch fused multiplicatively:
\begin{equation}
	u_{\mathrm{raw}} = \alpha_\theta \cdot \left( \mathcal{B}_{L,\theta}(z) \odot \mathcal{B}_{N,\theta}(z) \right)\in\mathbb{R}^{d_{\mathrm{out}}},
	\label{eq:multiplicative_core}
\end{equation}
where $\odot$ denotes element-wise multiplication and $\alpha_\theta$ is a learnable scalar (see Appendix~\ref{app:lnfno_arch} for initialization details).
Both branches map the latent representation to the discrete output space,
$\mathcal{B}_{L,\theta},\mathcal{B}_{N,\theta}:\mathbb{R}^{d_z}\to\mathbb{R}^{d_{\mathrm{out}}}$,
with $\mathcal{B}_{L,\theta}$ being affine and $\mathcal{B}_{N,\theta}$ a nonlinear multilayer perceptron (MLP).

\medskip
\noindent\textbf{Intuition.}
The proposed fusion strategy is conceptually grounded in both numerical analysis and deep learning.
Physically, the explicit decoupling of the operator into linear and nonlinear branches mirrors operator decomposition strategies inspired by numerical practices and empirical insights (see, e.g., \cite{lu2008recent} for Poisson--Boltzmann and \cite{zhang2020model} for Poisson--Nernst--Planck, where solutions are often globally close to their linear counterparts).
Structurally, in contrast to additive compositions such as residual connections in ResNets \cite{He2016ResNet} which learn corrections via summation, our multiplicative fusion allows the nonlinear branch to spatially gate or scale this dominant linear baseline in an input-adaptive manner.
This parallels feature-wise modulation mechanisms (e.g., FiLM \cite{perez2018film}) but is explicitly formulated to capture the coupled linear--nonlinear nature of PDE operators.
From an optimization perspective, this factorized structure can provide a favorable inductive bias by allowing the linear branch to capture a dominant baseline response, while the nonlinear branch focuses on input-dependent modulation, which may simplify the learning of complex operator mappings in practice.

Optionally, a lightweight decoder $\mathcal{D}_\theta$ refines local consistency on regular grids:
\[
F_\theta(x_h)=
\begin{cases}
	u_{\mathrm{raw}}, & \text{decoder-free},\\
	\mathcal{D}_\theta(u_{\mathrm{raw}}), & \text{with decoder}.
\end{cases}
\]

\subsection{Approximation Guarantee}

\begin{theorem}[Universal approximation]
	\label{thm:uat}
	Assume $K_h\subset\mathbb{R}^{d_{\mathrm{in}}}$ is compact and
	$F_h^\star\in C(K_h;\mathbb{R}^{d_{\mathrm{out}}})$.
	If LNF-NO employs non-polynomial activations, such as Rectified Linear Unit (ReLU), Gaussian Error Linear Unit (GELU), or $\tanh$,
	in its feedforward components, then for any $\varepsilon>0$
	there exist parameters $\theta$ such that
	\[
	\sup_{x_h\in K_h}\|F_\theta(x_h)-F_h^\star(x_h)\|\le \varepsilon.
	\]
\end{theorem}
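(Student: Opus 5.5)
The strategy is to reduce the statement to the classical universal approximation theorem for one-hidden-layer feedforward networks with non-polynomial activation (Leshno--Lin--Pinkus--Schocken, Pinkus). The key observation is that the multiplicative core can emulate a plain MLP: if the linear branch is made a constant vector, the product reduces to the nonlinear branch up to a fixed scaling. First the encoders and the decoder are reduced to simple maps, and then the whole approximation burden is placed on $\mathcal{B}_{N,\theta}$.

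\textbf{Step 1 (encoders).} Choose each $\Phi^{(m)}_\theta$ to be injective and continuous on the relevant compact set, ideally the identity. If the encoders are affine or MLPs with a final affine layer, set the weights to the identity and zero out the earlier layers. If hidden nonlinear layers cannot be bypassed, do not demand exact identity. Instead, only require that the composite map $x_h\mapsto z$ be continuous, and later approximate $F_h^\star$ through the MLP acting on $z$. The simplest route is the affine/identity choice, so that $z=x_h$ up to an invertible linear map. Then $K_z:=\{z(x_h):x_h\in K_h\}$ is compact, and $G:=F_h^\star\circ z^{-1}$ is continuous on $K_z$.

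\textbf{Step 2 (neutralise the linear branch).} Set the weight matrix of the affine map $\mathcal{B}_{L,\theta}$ to zero and its bias to $\mathbf{1}\in\mathbb{R}^{d_{\mathrm{out}}}$. Set $\alpha_\theta=1$. Then $u_{\mathrm{raw}}=\mathcal{B}_{N,\theta}(z)$.

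\textbf{Step 3 (approximation).} Apply the universal approximation theorem componentwise to $G$ on the compact set $K_z$. This gives a one-hidden-layer network $\mathcal{B}_{N,\theta}$ with non-polynomial activation (ReLU, GELU, $\tanh$) such that $\sup_{K_z}\|\mathcal{B}_{N,\theta}-G\|\le\varepsilon/2$ in the norm used (all norms are equivalent on $\mathbb{R}^{d_{\mathrm{out}}}$). If $\mathcal{B}_{N,\theta}$ is deeper, extend the shallow network by layers that approximate the identity on a compact range. For ReLU this is exact via $t=\sigma(t)-\sigma(-t)$. For smooth activations, a rescaled $\sigma(\lambda t)$ is approximately affine near a point with $\sigma'\neq0$, so the identity can be approximated uniformly on compacts. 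Uniform continuity of the remaining layers keeps the error at most $\varepsilon$.

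\textbf{Step 4 (decoder).} In the decoder-free case we are done. With a decoder, there are two options. The first is to make $\mathcal{D}_\theta$ the identity, or an approximate identity as in Step 3. The second is to use the decoder's own flexibility: approximate $F_h^\star$ to accuracy $\varepsilon/2$ by $\mathcal{D}_\theta\circ\widetilde G$, where $\widetilde G$ is continuous, and use uniform continuity of $\mathcal{D}_\theta$ on a compact neighbourhood of the range to absorb the core's error.

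\textbf{Main obstacle.} The hardest point is architectural. The approximation itself is classical, but the argument depends on the encoders and decoder being able to realise, or uniformly approximate, the identity, and injectivity of the encoders is genuinely needed. If an encoder has a fixed nonlinear bottleneck of too small a width, $F_h^\star$ may not factor through $z$. So the plan is to state explicitly the mild assumption that the encoders and decoder contain affine layers, or widths at least the input dimension, permitting identity or approximate-identity realisation. This is where the details of Appendix~\ref{app:lnfno_arch} would be invoked.
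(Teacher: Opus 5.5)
Your proposal is correct and is essentially the paper's own argument: reduce the encoders and decoder to (approximate) identity maps, set $\mathcal{B}_{L,\theta}\equiv\mathbf{1}$ and $\alpha_\theta=1$ so that the core collapses to the MLP $\mathcal{B}_{N,\theta}$, and invoke classical universal approximation for non-polynomial activations. Your extra care fills in details the paper's sketch leaves implicit: the approximate identities through the GELU layers, the explicit injectivity and width caveat on the encoders, and citing Leshno--Lin--Pinkus--Schocken, which, unlike Cybenko or Hornik, directly covers unbounded activations such as ReLU and GELU; like the paper, however, you tacitly let the branch widths grow beyond the fixed sizes (e.g., width $256$) listed in the appendix.
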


\emph{Proof sketch.}
By selecting encoder and decoder parameters so that they reduce to identity mappings
(or fixed linear embeddings) on $K_h$,
and setting $\mathcal{B}_{L,\theta}(z)\equiv\mathbf{1}$ and $\alpha_\theta=1$,
the architecture contains a standard feedforward network
from $\mathbb{R}^{d_{\mathrm{in}}}$ to $\mathbb{R}^{d_{\mathrm{out}}}$ as a special case.
Universal approximation then follows from classical results
for non-polynomial activations~\cite{Cybenko1989,Hornik1991}.
\hfill$\square$

\subsection{Optimization}

We train LNF-NO by supervised learning on pairs $(x_h,y_h)$ with $y_h=F_h^\star(x_h)$.
We use the averaged relative $\ell_2$ error across output fields as the loss:
\begin{equation}
	\mathcal{L}(\theta)
	=
	\frac{1}{B}\sum_{b=1}^B
	\frac{1}{C_{\mathrm{out}}}\sum_{c=1}^{C_{\mathrm{out}}}
	\frac{\|\hat y_{h,b}^{(c)}-y_{h,b}^{(c)}\|_2}{\|y_{h,b}^{(c)}\|_2+\varepsilon}.
	\label{eq:loss_multifield}
\end{equation}
All losses and metrics are computed on the decoded physical scale after inverse normalization.
We optimize using AdamW~\cite{Loshchilov2019AdamW}; task-specific hyperparameters, including learning rates and training schedules, are provided in the appendix.
Following common practice, we exclude bias terms and the scalar fusion scale $\alpha_\theta$ from weight decay, since they act as calibration parameters rather than capacity-controlling weights.
	
\section{Results}
\label{sec:results}

\textbf{Evaluation protocol.}
All models are trained and evaluated under a common evaluation protocol.
For each task, we use identical training and test sets across all methods with a fixed $9{:}1$ split,
and report the mean relative $\ell_2$ error on the test set.
DeepONet is trained for 5000 epochs, while the other neural operator baselines and LNF-NO are trained for 500 epochs.
Training time is measured on the same hardware and includes the full training process.
Task-specific hyperparameters, including learning rates, batch sizes, and architectural settings, are provided in the appendix.

\textbf{Task definitions.}
In all benchmarks, we learn solution operators that map boundary conditions
and/or input fields to solution fields evaluated on a fixed set of points.
We summarize the PDE tasks considered in this work below.

\emph{Laplace.}
We study the 2D Laplace equation
$\Delta u = 0$ in $\Omega=(0,1)^2$ with Dirichlet boundary condition $u|_{\partial\Omega}=g$,
and learn the operator mapping the boundary trace $g$ to the solution field $u$
on a fixed grid (see Appendix~\ref{app:laplace_mad}).

\emph{Burgers.}
We consider the 1D viscous Burgers equation
\begin{equation}
	\partial_t u + u\,\partial_x u = \nu\,\partial_{xx}u
	\quad \text{on } x\in[0,2\pi),~t\in[0,T],
\end{equation}
with periodic boundary conditions.
The operator maps the initial condition $u(\cdot,0)$ to the solution trajectory
$u(\cdot,t)$ on a fixed space--time grid
(see Appendix~\ref{app:burgers_lowmodes}).

\emph{Darcy flow.}
We benchmark the elliptic Darcy equation
$-\nabla\cdot(a(x)\nabla u)=f$ in $\Omega=(0,1)^2$,
where $a(x)$ denotes the permeability field.
Both smooth and piecewise constant permeability regimes are considered
(see Appendix~\ref{app:darcy}).

\emph{Poisson--Boltzmann (PB).}
We evaluate the nonlinear Poisson--Boltzmann operator
\begin{equation}
	-\Delta u + k\,\sinh(u) = f \quad \text{in } \Omega,
	\qquad u|_{\partial\Omega}=g,
	\label{eq:pb}
\end{equation}
where $k>0$ partially controls the strength of the nonlinearity.
The source-free case corresponds to $f=0$, while the source-driven case uses $f\neq 0$
(see, e.g.,~\cite{lu2008recent} for background on numerical PB models
and Appendix~\ref{app:pb} for benchmark details).

\emph{Irregular-domain PB.}
To evaluate geometric generalization, we further consider source-free PB problems
posed on irregular planar domains with multiple boundary components,
discretized using finite element methods and represented in a node-based format
(see Appendix~\ref{app:pb_irregular_fem}).

\emph{Poisson--Nernst--Planck (PNP).}
Finally, we benchmark a multi-field coupled operator corresponding to the Poisson--Nernst--Planck system (see, e.g.,~\cite{zhang2020model} for the classical PNP formulation and Appendix~\ref{app:pnp} for dataset details).
A dimensionless form is taken where physical parameters are set to unity for operator-learning purposes:
\begin{equation}
	-\Delta \phi = c_+ - c_-, \qquad \nabla \cdot (\nabla c_\pm \pm c_\pm \nabla \phi) = 0.
	\label{eq:pnp_simplified}
\end{equation}
The operator maps boundary traces to the electric potential $\phi$ and ion concentration fields $c_\pm$ on a fixed grid.

\subsection{Baseline benchmarks on regular grids}

\subsubsection{Single-input operators}

We first consider standard \emph{single-input} operator-learning benchmarks on regular grids,
including Laplace, Burgers, Darcy flow (smooth and discontinuous coefficients), and the source-free Poisson--Boltzmann equation ($k=1$).
These tasks span linear and nonlinear elliptic and parabolic operators
with varying degrees of complexity.

\begin{table*}[t]
	\centering
	\caption{Baseline benchmarks on regular grids (single-input operators).
		We report the test mean relative $\ell_2$ error and wall-clock training time under the unified protocol.}
	\label{tab:baseline_single}
	\begin{sc}
		\renewcommand{\arraystretch}{1.0}
		\begin{tabular}{l l c c c c}
			\toprule
			Equation & Model & \#Params & Epochs & Training Time (s) & Test rel.\ $\ell_2$ \\
			\midrule
			\multirow{5}{*}{Laplace}
			& DeepONet & 973,313 & 5000 & 1470.08 & 1.16e-02 \\
			& FNO      & 9,133,866 & 500 & 999.22 & 1.66e-02 \\
			& UNO      & 9,207,722 & 500 & 1393.00 & 2.05e-02 \\
			& Transolver & 2,158,538 & 500 & 7322.38 & 1.04e-02\\
			& LNF-NO   & 2,891,956 & 500 & \textbf{257.57} & \textbf{7.43e-03} \\
			\midrule
			\multirow{5}{*}{Burgers}
			& DeepONet & 938,497 & 5000 & 1760.09 & 1.05e-01 \\
			& FNO      & 8,413,953 & 500 & 2316.45 & 1.20e-02 \\
			& UNO      & 8,487,809 & 500 & 3143.00 & \textbf{1.10e-02} \\
			& Transolver & 1,438,625 & 500 & 43598.27 & 2.43e-02\\
			& LNF-NO   & 4,037,346 & 500 & \textbf{911.87} & 3.89e-02 \\
			\midrule
			\multirow{5}{*}{\shortstack{Darcy\\(Smooth Coeff.)}}
			& DeepONet & 5,182,209 & 5000 & 4981.13 & 5.65e-02 \\
			& FNO      & 8,413,953 & 500 & 3889.17 & 2.58e-03 \\
			& UNO      & 9,587,201 & 500 & 6830.23 & 2.39e-03 \\
			& Transolver & 1,438,625 & 500 & 302922.64 & 3.82e-03\\
			& LNF-NO   & 5,922,948 & 500 & \textbf{603.17} & \textbf{1.79e-03} \\
			\midrule
			\multirow{5}{*}{\shortstack{Darcy\\(Piecewise Coeff.)}}
			& DeepONet & 15,790,849 & 5000 & 36945.35 & 4.87e-02 \\
			& FNO      & 8,413,953 & 500 & 11127.40 & 1.24e-02 \\
			& UNO      & 9,587,201 & 500 & 119219.69 & \textbf{8.65e-03} \\
			& Transolver & 1,438,625 & 500 & 534539.25 & 8.78e-03\\
			& LNF-NO   & 106,488,900 & 500 & \textbf{2893.28} & 2.74e-02 \\
			\midrule
			\multirow{5}{*}{PB ($k=1$)}
			& DeepONet & 1,024,513 & 5000 & 1685.50 & 6.89e-02 \\
			& FNO      & 11,138,266 & 500 & 3700.89 & 8.29e-02 \\
			& UNO      & 11,212,122 & 500 & 5602.36 & 9.10e-02 \\
			& Transolver & 4,228,730 & 500 & 32475.36 & 8.30e-02\\
			& LNF-NO   & 8,027,156 & 500 & \textbf{671.30} & \textbf{1.99e-02} \\
			\bottomrule
		\end{tabular}
	\end{sc}
\end{table*}

The quantitative results are summarized in Table~\ref{tab:baseline_single}.
Across these single-input benchmarks, LNF-NO consistently exhibits a clear advantage in training efficiency, and it attains the best accuracy on Laplace, Darcy flow with smooth coefficients, and the source-free Poisson--Boltzmann benchmark.
On the Darcy flow (Smooth) benchmark, for example, LNF-NO achieves the lowest error ($1.79\times 10^{-3}$) among the compared models, while requiring substantially less training time than the competing baselines.

This optimization advantage is further illustrated in Fig.~\ref{fig:pb1_convergence}, where LNF-NO shows markedly faster error reduction on the source-free PB benchmark, both as a function of training epoch and of wall-clock time.

\begin{figure*}[t]
	\centering
	\includegraphics[width=0.48\linewidth]{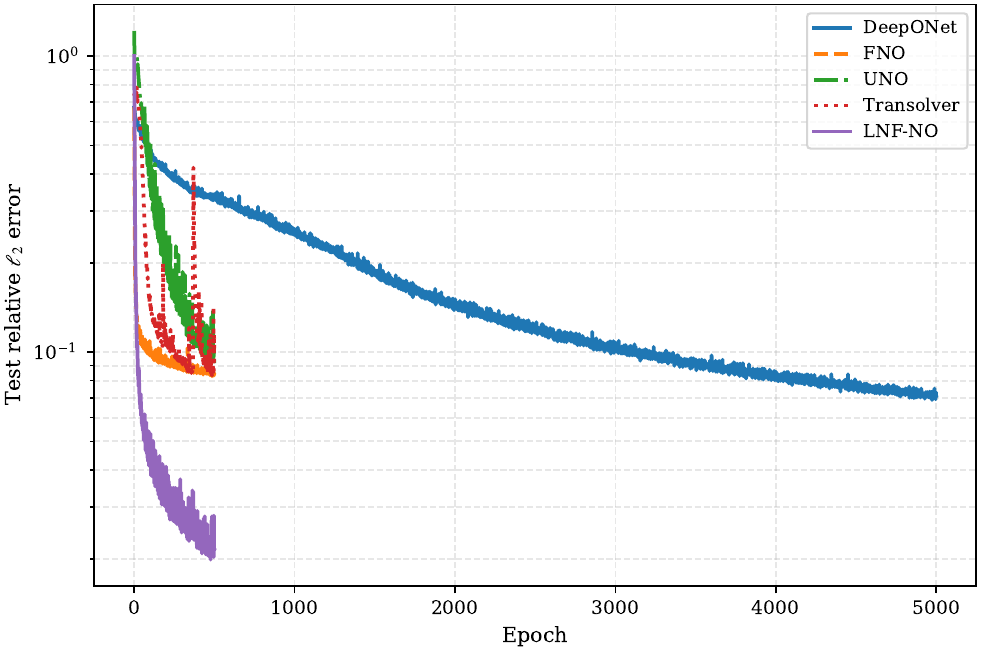}
	\hfill
	\includegraphics[width=0.48\linewidth]{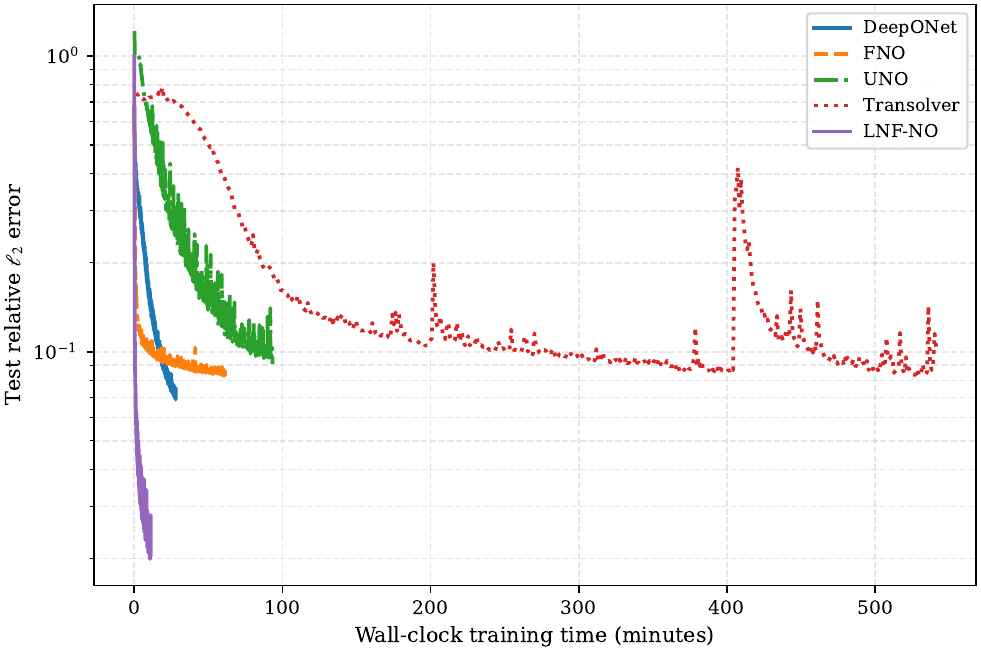}
	\caption{\textbf{Convergence comparison on the source-free Poisson--Boltzmann benchmark ($k=1$).}
		Left: test relative $\ell_2$ error versus training epoch.
		Right: test relative $\ell_2$ error versus wall-clock training time.
		LNF-NO exhibits consistently faster error reduction than the compared baselines, especially when measured against actual training time.}
	\label{fig:pb1_convergence}
\end{figure*}

\subsubsection{Multi-input operators}

We next consider \emph{multi-input} operator-learning problems,
where the input consists of multiple function-valued components.
These include a source-driven Poisson--Boltzmann equation
and a coupled Poisson--Nernst--Planck system.

\begin{table*}[t]
	\centering
	\caption{Baseline benchmarks on regular grids (multi-input operators).
		Each task takes multiple function-valued inputs (e.g., boundary traces and a source field) and predicts fields on a fixed grid.}
	\label{tab:baseline_multi}
	\begin{sc}
		\renewcommand{\arraystretch}{1.0}
		\begin{tabular}{l l c c c c}
			\toprule
			Equation & Model & \#Params & Epochs & Training Time (s) & Test rel.\ $\ell_2$ \\
			\midrule
			\multirow{5}{*}{PB with source ($k=1$)}
			& DeepONet & 4,096,769 & 5000 & 2549.46 & 2.64e-01 \\
			& FNO      & 11,138,330 & 500 & 3751.44 & 3.71e-02 \\
			& UNO      & 11,212,186 & 500 & 5683.71 & 3.63e-02 \\
			& Transolver & 4,228,986 & 500 & 28264.52 & 4.85e-02\\
			& LNF-NO   & 9,662,852 & 500 & \textbf{988.78} & \textbf{1.47e-02} \\
			\midrule
			\multirow{5}{*}{PNP system}
			& DeepONet & 1,841,667 & 5000 & 12299.65 & 6.01e-02 \\
			& FNO      & 8,409,987 & 500 & 8197.63 & 2.21e-02 \\
			& UNO      & 8,483,843 & 500 & 9873.26 & 3.64e-02 \\
			& Transolver & 1,439,395 & 500 & 47281.55 & \textbf{8.19e-03}\\
			& LNF-NO   & 35,954,602 & 500 & \textbf{2872.09} & 1.88e-02 \\
			\bottomrule
		\end{tabular}
	\end{sc}
\end{table*}

Table~\ref{tab:baseline_multi} presents the performance on these multi-input tasks.
LNF-NO naturally accommodates multi-function inputs through independent encoders and a shared operator core.
For the source-driven PB equation, LNF-NO attains the best accuracy among the compared models ($1.47\times 10^{-2}$) while requiring substantially less training time than FNO, UNO, and Transolver.
For the coupled PNP system, the lowest error is achieved by Transolver, whereas LNF-NO remains competitive in accuracy and offers a clear advantage in training efficiency.

\subsection{Varying Nonlinear Strengths in Poisson--Boltzmann Equations}

We further investigate robustness under increasing nonlinearity
using the source-free Poisson--Boltzmann equation with
$k=0.01$, $1$, and $100$.
As $k$ increases, the operator becomes increasingly stiff
and exhibits sharp boundary layers.

\begin{table*}[t]
	\centering
	\caption{Poisson--Boltzmann regimes with varying nonlinear strengths (source-free).
		We vary the nonlinearity strength $k$ in \eqref{eq:pb} with $f=0$ and report test mean relative $\ell_2$ error and training time.}
	\label{tab:nonlinear}
	\begin{sc}
		\renewcommand{\arraystretch}{1.0}
		\begin{tabular}{l l c c c c}
			\toprule
			Equation & Model & \#Params & Epochs & Training Time (s) & Test rel.\ $\ell_2$ \\
			\midrule
			\multirow{5}{*}{PB ($k=0.01$)}
			& DeepONet & 1,024,513 & 5000 & 1821.46 & 5.72e-02 \\
			& FNO      & 11,138,266 & 500 & 3747.64 & 6.24e-02 \\
			& UNO      & 11,212,122 & 500 & 5862.01 & 6.76e-02 \\
			& Transolver & 4,228,730 & 500 & 28642.89 & 4.27e-02\\
			& LNF-NO   & 8,027,156 & 500 & \textbf{668.07} & \textbf{1.65e-02} \\
			\midrule
			\multirow{5}{*}{PB ($k=1$)}
			& DeepONet & 1,024,513 & 5000 & 1685.50 & 6.89e-02 \\
			& FNO      & 11,138,266 & 500 & 3700.89 & 8.29e-02 \\
			& UNO      & 11,212,122 & 500 & 5602.36 & 9.10e-02 \\
			& Transolver & 4,228,730 & 500 & 32475.36 & 8.30e-02\\
			& LNF-NO   & 8,027,156 & 500 & \textbf{671.30} & \textbf{1.99e-02} \\
			\midrule
			\multirow{5}{*}{PB ($k=100$)}
			& DeepONet & 1,024,513 & 5000 & 2280.09 & 1.01e-01 \\
			& FNO      & 11,138,266 & 500 & 3735.93 & 1.01e-01 \\
			& UNO      & 11,212,122 & 500 & 5901.40 & 1.18e-01 \\
			& Transolver & 4,228,730 & 500 & 28294.86 & 1.07e-01\\
			& LNF-NO   & 8,027,156 & 500 & \textbf{735.33} & \textbf{2.28e-02} \\
			\bottomrule
		\end{tabular}
	\end{sc}
\end{table*}

The results under varying nonlinearity strengths are detailed in Table~\ref{tab:nonlinear}.
Across all three nonlinearity regimes, LNF-NO attains the lowest test error among the compared models while maintaining a clear advantage in training efficiency.
For the stiffest case ($k=100$), LNF-NO achieves a relative error of $2.28\times 10^{-2}$, significantly outperforming all compared baselines, suggesting that the proposed multiplicative fusion provides an effective inductive bias for stiff nonlinear operators.

\subsection{Irregular geometries}

\begin{table*}[t]
	\centering
	\caption{Performance on irregular domains for the Poisson--Boltzmann equation.
		The problems are posed on non-rectangular geometries with complex boundaries
		and discretized using unstructured node-based representations,
		for which grid-based spectral operators such as FNO are not directly applicable.}
	\label{tab:irregular}
	\begin{sc}
		\renewcommand{\arraystretch}{1.0}
		\begin{tabular}{c l c c c c}
			\toprule
			Geometry & Model & \#Params & Epochs & Training Time (s) & Test rel.\ $\ell_2$ \\
			\midrule
			\multirow{3}{*}{\includegraphics[height=3.5em]{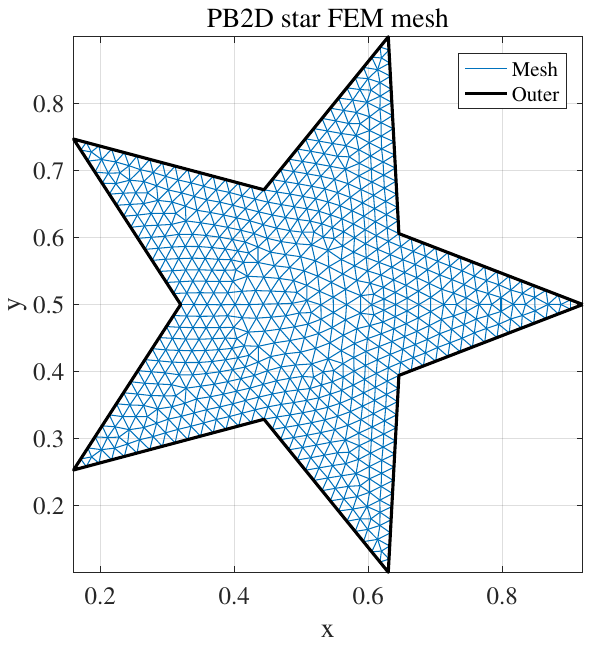}}
			& DeepONet & 973,313 & 5000 & 1050.88 & 3.47e-02 \\
			& Transolver & 1,014,924 & 500 & 3137.79 & 2.92e-02\\
			& LNF-NO   & 1,912,695 & 500 & \textbf{114.79} & \textbf{1.05e-02} \\
			\midrule
			\multirow{3}{*}{\includegraphics[height=3.5em]{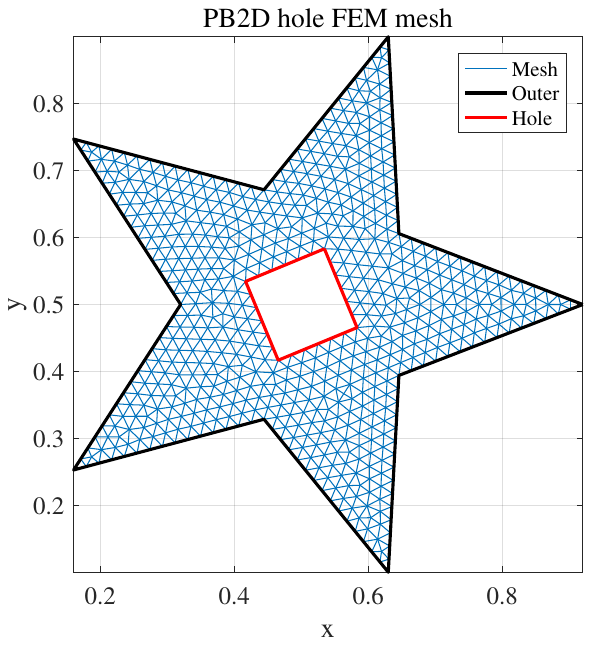}}
			& DeepONet & 973,313 & 5000 & 925.12 & 3.46e-02 \\
			& Transolver & 1,004,644 & 500 & 3012.29 & 2.71e-02\\
			& LNF-NO   & 1,892,135 & 500 & \textbf{109.16} & \textbf{1.12e-02} \\
			\midrule
			\multirow{3}{*}{\includegraphics[height=3.5em]{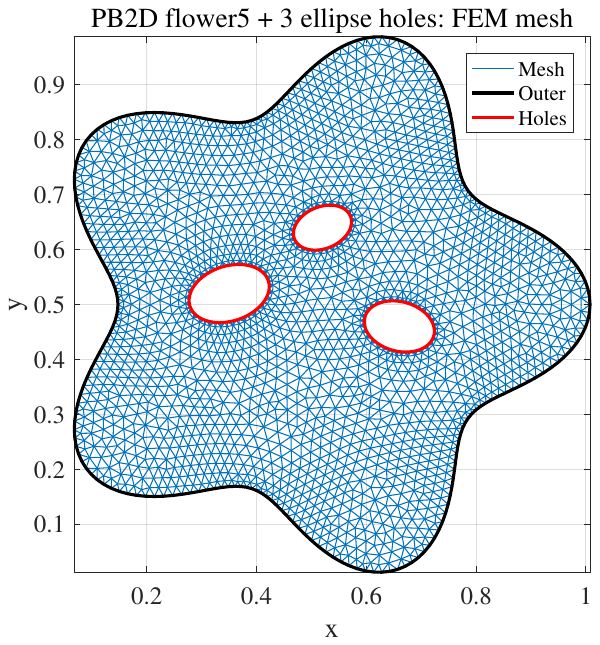}}
			& DeepONet & 973,313 & 5000 & 1025.40 & 3.20e-02 \\
			& Transolver & 1,411,732 & 500 & 10148.53 & 2.70e-02\\
			& LNF-NO   & 2,706,311 & 500 & \textbf{117.66} & \textbf{1.23e-02} \\
			\bottomrule
		\end{tabular}
	\end{sc}
\end{table*}

Table~\ref{tab:irregular} compares the performance on three non-rectangular geometries.
LNF-NO consistently attains the best accuracy on all three irregular geometries while requiring substantially less wall-clock training time than the compared baselines.
On the Star domain, for example, LNF-NO attains a test relative error of $1.05\times 10^{-2}$, substantially improving over both DeepONet and Transolver while requiring much less training time.
These results highlight the geometric generality of our method: LNF-NO remains directly applicable on irregular domains discretized by unstructured nodes, whereas grid-based spectral operators such as FNO are not directly applicable in this setting.

\subsection{Extension to three-dimensional problems}

Quantitative comparisons for the 3D benchmark are provided in Table~\ref{tab:3d_pb}.
Although Transolver attains the lowest error, LNF-NO achieves strong 3D accuracy ($4.32\times 10^{-2}$) while requiring substantially less training time than both 3D FNO and Transolver.
Qualitatively, Figure~\ref{fig:pb3d_main} visualizes the central cross-sections of the predicted solution field $u(x,y,z)$, demonstrating that LNF-NO can accurately reconstruct the 3D potential distribution from boundary data.

\begin{table}[t]
	\centering
	\caption{Results on the 3D Poisson--Boltzmann benchmark on a $33^3$ grid.
		We report test mean relative $\ell_2$ error and wall-clock training time under the unified protocol.}
	\label{tab:3d_pb}
	\begin{sc}
		\resizebox{0.6\linewidth}{!}{
			\begin{tabular}{l c c c c}
				\toprule
				Model & \#Params & Epochs & Time (s) & Rel.\ $\ell_2$ \\
				\midrule
				DeepONet & 2.5M  & 5000 & 1349 & 3.34e-01 \\
				FNO (3D) & 15.0M & 500  & 7559 & 7.71e-02 \\
				UNO & 13.0M & 500  & 10612 & 8.53e-02 \\
				Transolver & 11.8M & 500  & 53311 & \textbf{3.66e-02} \\
				LNF-NO   & 21.7M & 500  & 3253 & 4.32e-02 \\
				\bottomrule
		\end{tabular}}
	\end{sc}
\end{table}

\begin{figure*}[t]
	\centering
	\includegraphics[width=0.9\linewidth]{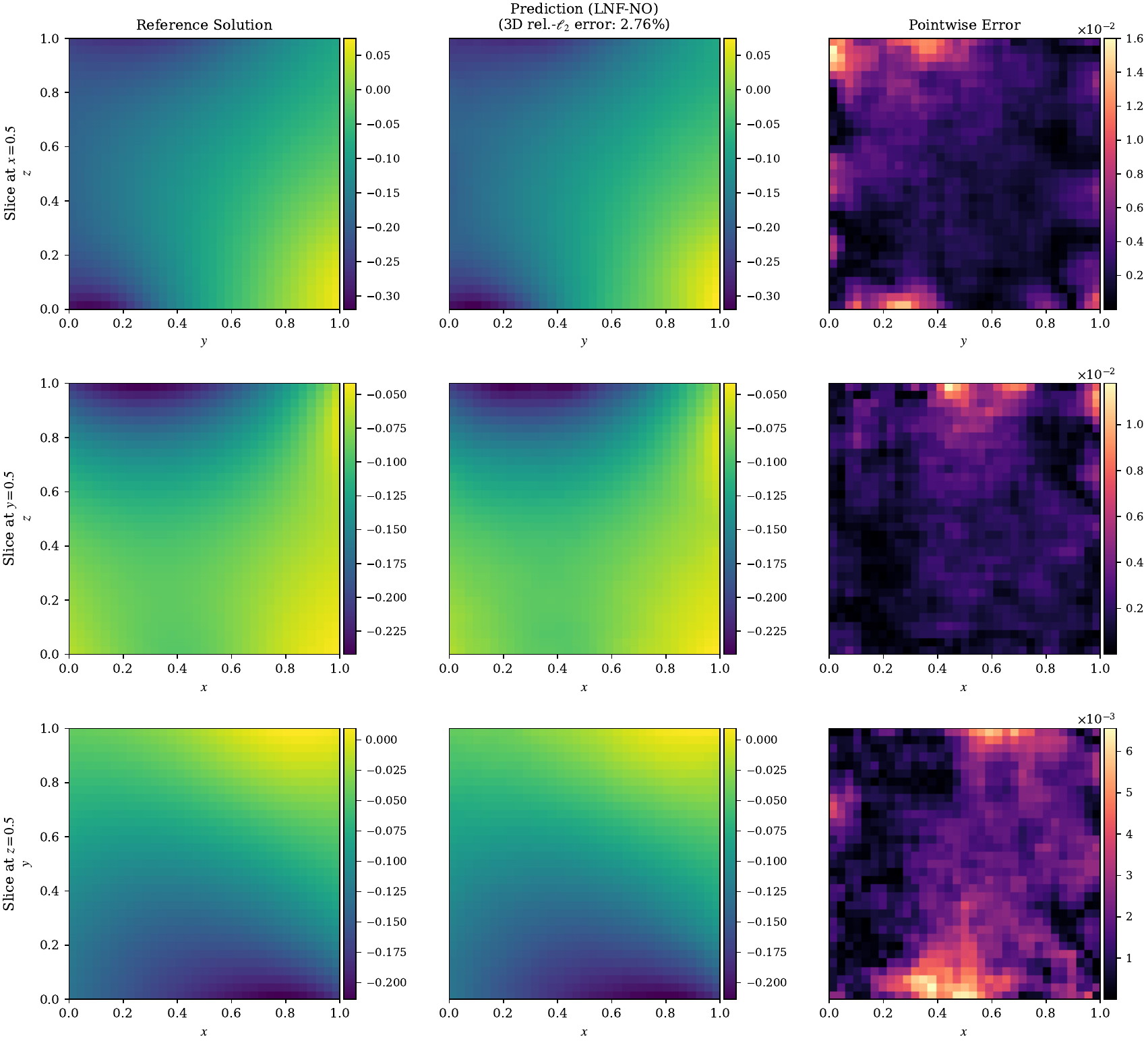}
	\caption{\textbf{3D Poisson--Boltzmann extension.}
		Central cross-sections of the solution field $u(x,y,z)$ at $x=0.5$, $y=0.5$, and $z=0.5$.
		The prediction is reconstructed from boundary data by LNF-NO.}
	\label{fig:pb3d_main}
\end{figure*}

Taken together, the experimental results across Tables~\ref{tab:baseline_single}--\ref{tab:3d_pb} show a consistent pattern.
On regular-grid benchmarks (Tables~\ref{tab:baseline_single} and \ref{tab:baseline_multi}), LNF-NO achieves strong overall performance while substantially reducing wall-clock training time relative to the compared baselines.
Beyond regular grids, as evidenced in Table~\ref{tab:irregular}, LNF-NO remains directly applicable to irregular geometries represented by unstructured nodes, where grid-based spectral operators are not readily usable.
In addition, on the 3D Poisson--Boltzmann benchmark (Table~\ref{tab:3d_pb} and Fig.~\ref{fig:pb3d_main}), LNF-NO maintains stable optimization and achieves strong 3D accuracy while requiring substantially less training time than the 3D FNO and Transolver baselines.
Overall, these results suggest that the proposed multiplicative fusion functions primarily as an efficiency-oriented inductive bias that is broadly compatible with heterogeneous inputs, non-rectangular geometries, and three-dimensional extensions, rather than a task-specific architectural specialization.

	\section{Discussion}
	\label{sec:discussion}
	
	The experimental results indicate that, within the scope of the problem settings and equations investigated, LNF-NO offers a valuable trade-off between efficiency, versatility, and accuracy.
	Rather than claiming superiority across all possible PDE learning tasks, we position LNF-NO as a streamlined, broadly applicable framework that is particularly effective for multi-physics problems and geometric variations, where specialized grid-based or highly task-specific baselines may require additional adaptations.
	
	First, LNF-NO exhibits a high degree of architectural uniformity.
	While grid-based spectral approaches (e.g., FNO) are inherently restricted to regular meshes and require specific adaptations for complex geometries, LNF-NO employs a unified linear--nonlinear fusion core across regular grids, irregular meshes, and three-dimensional domains.
	Although point-based baselines like DeepONet also support irregular geometries, our experiments suggest that they may face optimization difficulties in higher-dimensional or strongly nonlinear settings.
	In contrast, LNF-NO maintains stable optimization and consistent performance across these diverse scenarios using a single architectural paradigm.
	This design naturally accommodates multiple functional inputs and coupled multi-field outputs without requiring problem-specific architectural redesign.
	
	Second, LNF-NO consistently demonstrates significantly improved training efficiency.
	Under the common evaluation protocol used in this work, our method exhibits a clear advantage in training efficiency on most benchmarks.
	The linear--nonlinear fusion structure acts as an efficiency-oriented inductive bias and empirically leads to faster error reduction, particularly in stiff nonlinear regimes.
	Unlike emerging Transformer-based large-scale foundation models~\cite{Ye2024PDEformer} that prioritize massive model capacity and often incur high computational costs, LNF-NO targets an explicitly efficiency-oriented design regime.
	This makes it particularly suitable as a lightweight baseline for rapid exploration and analysis.
	
	Third, regarding predictive performance, LNF-NO achieves comparable or improved accuracy in most cases relative to the baselines.
	It performs particularly well on strongly nonlinear Poisson--Boltzmann problems, irregular-domain benchmarks, and several multi-input settings.
	We also note that on certain established benchmarks, such as Burgers, discontinuous Darcy flow, the PNP system, and the 3D Poisson--Boltzmann problem, the lowest error may be attained by other specialized baselines.
	Even in these cases, however, LNF-NO remains competitive in accuracy while retaining clear advantages in training efficiency and architectural uniformity.
	
	Future work will focus on extending the fusion mechanism to practical engineering contexts involving more complex geometries and realistic boundary conditions.
	We also plan to investigate the scalability of the multiplicative fusion mechanism in higher-resolution three-dimensional simulations, where balancing accuracy and computational efficiency becomes increasingly important.
	
	\section*{Impact Statement}
	
	This work aims to advance machine learning for scientific computing,
	with a particular focus on efficient and general neural operator learning for partial differential equations.
	The proposed LNF-NO architecture improves the training efficiency and practical applicability
	of operator-learning models, which has potential impact in areas such as
	computational physics, engineering design, and computational biology.
	In particular, problems arising from Poisson--Boltzmann and Poisson--Nernst--Planck models
	are central to biomolecular electrostatics and ion transport modeling,
	and more efficient operator surrogates may help accelerate exploratory simulations
	and parameter studies in these domains.
	
	We do not anticipate any direct negative societal or ethical consequences arising from this work.
	The methods presented are intended to support scientific research and modeling,
	and do not introduce new concerns related to misuse, fairness, or safety beyond
	those commonly associated with numerical simulation and surrogate modeling techniques.

	\clearpage
	
	\appendix
	\renewcommand{\thefigure}{A\arabic{figure}}
	\renewcommand{\thetable}{A\arabic{table}}
	\setcounter{figure}{0}
	\setcounter{table}{0}
	\onecolumn
	
	\section{Dataset generation and preprocessing}
	\label{app:data}
	
	We provide detailed descriptions of the data generation process for each benchmark considered in this work.
	Specific numerical parameters are detailed below, and the core implementation scripts are available upon reasonable request.
	
	\subsection{Laplace dataset generation}
	\label{app:laplace_mad}
	
	We consider the 2D Laplace equation
	\begin{equation}
		\Delta u = 0 \quad \text{in } \Omega=[0,1]^2,
		\qquad u|_{\partial\Omega}=g,
	\end{equation}
	and learn the solution operator mapping the Dirichlet boundary trace $g$ to the solution field $u$ on a fixed grid.
	This dataset is generated following the Mathematical Artificial Data (MAD) framework proposed by \cite{Wu2025MAD}.
	
	\paragraph{Discretization.}
	We use a uniform $N\times N$ grid on $\Omega$ with $N=51$ and spacing $h=1/(N-1)$, resulting in $N_g=N^2=2601$ evaluation points.
	The boundary trace is sampled on $\partial\Omega$ using $N$ points per side with duplicated corner points removed, yielding
	$N_b=4(N-1)=200$ boundary samples.
	The boundary points are ordered by concatenating the four edges in a fixed counterclockwise manner.
	
	\paragraph{MAD solution family.}
	For each sample, we synthesize a harmonic function as a linear combination of fundamental solutions with sources located outside the domain.
	Specifically, we draw $J=10$ source points $\{c_j\}_{j=1}^{J}$ outside an expanded box $[-\varepsilon,1+\varepsilon]^2$ with $\varepsilon=10^{-3}$,
	and random weights $\{w_j\}_{j=1}^{J}$.
	The solution is defined by
	\begin{equation}
		u(x) = \sum_{j=1}^{J} w_j \cdot \frac{1}{2}\log\big(\|x-c_j\|^2\big),
	\end{equation}
	which is harmonic inside $\Omega$ because all sources lie outside $\Omega$.
	
	\paragraph{Data format and normalization.}
	Each sample is stored as a single vector
	\[
	\big[u_b \,;\, u\big]\in\mathbb{R}^{N_b+N_g},
	\]
	where $u_b$ contains boundary values evaluated at the $N_b$ boundary points and $u$ contains the solution values on the $N_g$ grid points.
	We apply per-sample normalization by dividing the entire vector by its maximum absolute value so that $\max|[u_b;u]|=1$.
	
	\subsection{Burgers dataset generation}
	\label{app:burgers_lowmodes}
	
	We consider the one-dimensional viscous Burgers equation on a periodic domain,
	\begin{equation}
		\partial_t u(x,t) + u(x,t)\,\partial_x u(x,t) = \nu\,\partial_{xx} u(x,t),
		\qquad x \in [0,2\pi),\; t \in [0,T],
	\end{equation}
	with periodic boundary conditions and viscosity $\nu=0.01$.
	The learning task is to approximate the solution operator mapping the initial condition
	$u_0(x)=u(x,0)$ to the solution field $u(x,t)$ for $t>0$.
	
	\paragraph{Initial condition sampling.}
	Initial conditions are generated as low-frequency random Fourier series,
	\begin{equation}
		u_0(x) = \sum_{m=1}^{K} \frac{a_m \cos(mx) + b_m \sin(mx)}{m^{\alpha}},
	\end{equation}
	where $a_m,b_m \sim \mathcal{N}(0,1)$ are independent standard Gaussian variables,
	$K=8$ controls the maximum frequency, and $\alpha=2.0$ controls spectral decay.
	Each realization is shifted to have zero mean and normalized to unit standard deviation.
	
	\paragraph{Numerical integration.}
	The Burgers equation is integrated on a fine spatial grid ($N_{\text{fine}}=512$) using a Fourier spectral method
	combined with an exponential time-differencing fourth-order Runge--Kutta (ETDRK4) scheme.
	A $2/3$ dealiasing rule is applied in Fourier space to avoid aliasing errors.
	The solution is evolved up to final time $T=1.0$ with time step $\Delta t=10^{-4}$.
	
	\paragraph{Downsampling and dataset construction.}
	To obtain training data on a coarse grid, the high-resolution solution is low-pass filtered in Fourier space
	and then downsampled to $N_x=64$ spatial points.
	The solution is recorded at $N_t=100$ uniformly spaced time snapshots (excluding $t=0$).
	Each dataset sample therefore consists of the coarse-grid initial condition $u_0 \in \mathbb{R}^{64}$ and the solution trajectory $u \in \mathbb{R}^{64 \times 100}$.
	The total dataset size is 2000 samples.
	
	\subsection{Darcy flow datasets}
	\label{app:darcy}
	
	We consider two Darcy flow benchmarks with different regularity properties of the permeability field:
	a discontinuous-coefficient dataset obtained from a publicly released benchmark, and a continuous-coefficient
	dataset generated synthetically by us.
	
	\paragraph{Discontinuous permeability (public dataset).}
	For the discontinuous-permeability benchmark, we use a publicly released Darcy dataset that has been widely
	adopted in the neural operator and Fourier Neural Operator (FNO) literature.
	The dataset consists of paired permeability fields $a(x,y)$ with sharp discontinuities and corresponding
	solutions $u(x,y)$ of the elliptic problem
	\[
	-\nabla\cdot(a(x,y)\nabla u(x,y)) = 1 \quad \text{in } (0,1)^2, \qquad u=0 \text{ on } \partial(0,1)^2.
	\]
	The data are provided on a uniform grid of resolution $241\times241$ and distributed in MATLAB format.
	We utilize the benchmark dataset originally introduced by \cite{li2021fourier}. Specifically, we obtain the data files from the public repository \texttt{raj-brown/fourier\_neural\_operator}\footnote{\url{https://github.com/raj-brown/fourier_neural_operator}} and use them directly without further modification.
	
	\paragraph{Continuous permeability (synthetic dataset).}
	For the continuous-permeability benchmark, we generate the dataset synthetically using the following procedure.
	
	\emph{Coefficient generation.}
	The permeability field $a(x,y)$ is generated by evaluating a randomly initialized
	sinusoidal multilayer perceptron on a uniform grid.
	Specifically, we use a fully connected network of architecture $[2,50,50,1]$ with sine activations,
	\[
	a_{\mathrm{raw}}(x,y) = W_3 \sin\big(W_2 \sin(W_1 [x,y]^\top + b_1) + b_2\big) + b_3,
	\]
	and map it to a strictly positive coefficient via
	$a(x,y) = 0.1 + \mathrm{softplus}(a_{\mathrm{raw}}(x,y))$.
	The network weights are sampled independently for each realization.
	
	\emph{High-resolution PDE solve.}
	Given $a(x,y)$, we solve the Darcy equation
	on a high-resolution grid of size $241\times241$ using a finite-difference discretization with face-averaged coefficients
	and a sparse direct solver.
	
	\emph{Downsampling.}
	Both the coefficient field $a(x,y)$ and the solution $u(x,y)$
	are downsampled to a $129\times129$ grid using bilinear interpolation.
	
	\subsection{Poisson--Boltzmann (PB) datasets}
	\label{app:pb}
	
	All PB datasets follow the task definition in Eq.~\eqref{eq:pb}:
	\begin{equation}
		-\Delta u + k\,\sinh(u) = f \quad \text{in } \Omega,
		\qquad u|_{\partial\Omega}=g,
	\end{equation}
	where $k>0$ partially controls the strength of the nonlinearity.
	The \emph{PB with source} case corresponds to $f\neq 0$, while other PB cases use $f=0$.
	
	\subsubsection{PB on unit square without source}
	\label{app:pb_square_fd}
	
	We consider the source-free case $f=0$ on $\Omega=(0,1)^2$.
	The domain is discretized by a uniform $N\times N$ grid ($N=101$) using the finite difference method (5-point stencil).
	Boundary values $g$ are sampled on $\partial\Omega$ ($N_b=400$) and generated as a mixture of low-frequency and high-frequency Gaussian random fields.
	The resulting discrete nonlinear system is solved using a continuation (homotopy) strategy coupled with a Newton--Raphson method, utilizing a direct sparse solver for the linearized steps.
	Each accepted realization is stored as a vector $[g \,;\, u]\in\mathbb{R}^{N_b+N^2}$.
	
	\subsubsection{PB on unit square with source (MAD dataset)}
	\label{app:pb_source_mad}
	
	We also consider a source-driven PB dataset following the MAD paradigm \cite{Wu2025MAD}.
	For each sample, we synthesize a smooth field $u(x,y)$ via a randomized sine network and compute the consistent forcing
	$\tilde f(x,y)=\Delta u - k\sinh(u)$.
	Each sample is stored as $[u_b \,;\, \tilde f \,;\, u]$, where inputs are the boundary trace $u_b$ and the forcing term $\tilde f$.
	
	\subsubsection{Irregular-domain PB datasets}
	\label{app:pb_irregular_fem}
	
	To evaluate geometric generalization, we generate PB datasets on irregular planar domains using finite-element methods (FEM).
	Unlike grid-based datasets, these are \emph{node-based}: solutions are stored at FEM mesh nodes.
	
	\paragraph{Geometry and Solver.}
	We generate three benchmarks: (1) a star-shaped polygon, (2) a star with a square hole, and (3) a flower shape with elliptical holes (see Fig.~\ref{fig:pb_fem_meshes}).
	For each geometry, we generate a fixed triangular mesh.
	Boundary values are sampled from a periodic Gaussian random field and mapped to boundary nodes.
	The nonlinear system is solved using linear (P1) finite elements on the unstructured mesh.
	To ensure robust convergence, we employ a homotopy continuation strategy coupled with a damped Newton--Raphson solver.
	Within each continuation step, the solution is initialized via Picard iterations, and the linearized systems are solved using a direct sparse solver.
	
	\begin{figure}[t]
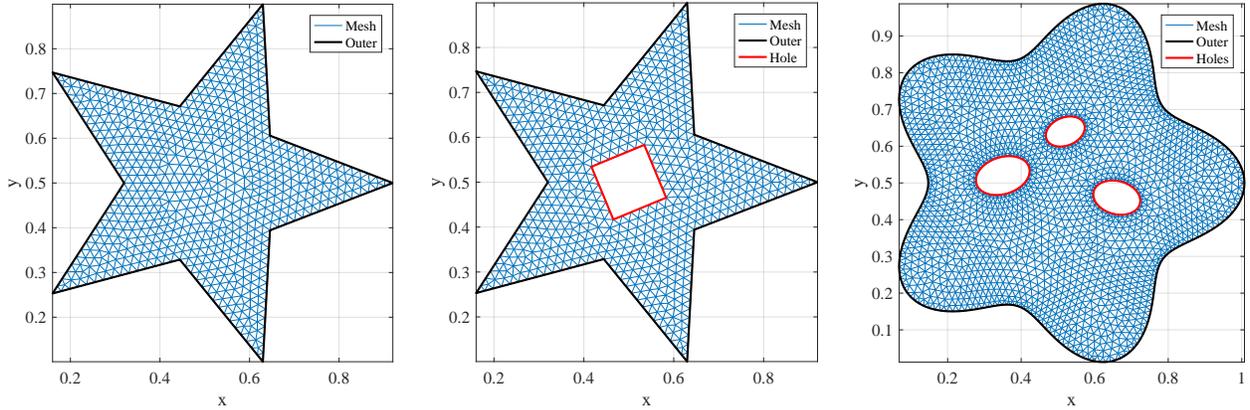

		\centering
		\includegraphics[width=0.32\linewidth]{star_outline.pdf}\hfill
		\includegraphics[width=0.32\linewidth]{hole_star_outline.pdf}\hfill
		\includegraphics[width=0.32\linewidth]{flower_outline.pdf}
		\caption{Representative geometries and corresponding finite-element meshes for the irregular-domain Poisson--Boltzmann benchmarks.}
		\label{fig:pb_fem_meshes}
	\end{figure}
	
	\subsubsection{3D PB on unit cube}
	\label{app:pb_3d}
	
	We generate a 3D dataset on $\Omega=[0,1]^3$ with $k=1.0$.
	The domain is discretized into a $33 \times 33 \times 33$ grid ($N_g=35937$).
	Dirichlet boundary conditions are sampled on the 6 faces of the cube, resulting in $N_b = 6N^2 - 12N + 8$ unique boundary points.
	Inputs $g$ are synthesized using random Yukawa potential sources placed outside the domain to ensure smooth but non-trivial boundary patterns.
	The nonlinear system is solved using a finite-difference discretization combined with a damped Newton--Raphson method, where the linearized systems are solved using a direct sparse solver.
	The dataset consists of 2000 samples.
	
	\subsection{Poisson--Nernst--Planck (PNP) dataset generation}
	\label{app:pnp}
	
	\textbf{Governing Equations.}
	For operator-learning benchmarking, we consider the dimensionless steady-state Poisson--Nernst--Planck system for a binary symmetric electrolyte. The system describes the coupling between the electrostatic potential $\phi$ and the ionic concentrations $c_\pm$ (see, e.g., \cite{zhang2020model} for formulation details), governed by the following equations:
	\begin{equation}
		-\Delta \phi = c_+ - c_-, \qquad \nabla \cdot (\nabla c_\pm \pm c_\pm \nabla \phi) = 0,
	\end{equation}
	where the domain is the unit square $\Omega=(0,1)^2$. The first equation (Poisson) relates the potential to the charge density difference, while the second pair (Nernst--Planck) describes the conservation of mass under diffusion and electromigration. This formulation captures the essential nonlinear coupling where potential gradients drive ion transport and ion distributions determine the potential field.
	
	\textbf{Solver \& Data.}
	We generate the reference solutions using a Gummel fixed-point iteration scheme. At each iteration step, the Poisson equation and the Nernst--Planck equations are solved sequentially using finite differences on a $129 \times 129$ uniform grid until convergence.
	The input boundary conditions are generated via truncated Fourier series, with strictly enforced positivity ($c_\pm > 0$) to ensure physical validity.
	The final dataset consists of 2000 samples, mapping the three boundary traces ($g_\phi, g_{c_+}, g_{c_-}$) to the full interior fields $(\phi, c_+, c_-)$.
	
	\section{Training and evaluation protocol}
	\label{app:train_protocol}
	
	\paragraph{Hardware and software.}
	Experiments are conducted on a single NVIDIA GeForce RTX 4070 Laptop GPU using PyTorch 2.4.1.
	Wall-clock training time excludes data generation.
	
	\paragraph{Data split and normalization.}
	We use a fixed $9{:}1$ train/test split.
	All inputs and outputs are normalized using global statistics (mean and standard deviation) computed strictly on the training set.
	Evaluation is always performed on the \emph{physical scale} by decoding predictions back to the original domain.
	
	\paragraph{Loss functions.}
	For scalar fields, we use the relative $\ell_2$ error:
	\[
	\mathrm{rel}\,\ell_2(\hat{u},u)=\frac{\|\hat{u}-u\|_2}{\|u\|_2+\varepsilon}, \quad \varepsilon=10^{-12}.
	\]
	For the coupled PNP task, we use the mean of the relative errors of the three fields $(\phi, c_+, c_-)$.
	
	\paragraph{Optimization.}
	We use the AdamW optimizer with weight decay $10^{-4}$ (excluding bias and scale parameters).
	DeepONet is trained for 5000 epochs, while the other neural operator baselines and LNF-NO are trained for 500 epochs.
	Learning rates and batch sizes are task- and model-dependent, and are specified in the corresponding experiment scripts.
	
	\paragraph{Reproducibility.}
	Due to computational constraints and to facilitate reproducibility, our main results are reported using a single fixed random seed.
	To assess sensitivity to initialization, we additionally repeat the source-free PB benchmark ($k=1$) across five random seeds and report the corresponding results in Table~\ref{tab:seed_stability_pb1}.
	
	\paragraph{Supplementary code.}
	The implementation of LNF-NO and the core baseline scripts are available upon reasonable request.
	
	\subsection{Seed stability on the source-free PB benchmark}
	\label{app:seed_stability}
	
	To examine the sensitivity of LNF-NO to random initialization, we repeat the source-free Poisson--Boltzmann benchmark with $k=1$ across five random seeds.
	For each run, we report the best test relative $\ell_2$ error and the total wall-clock training time.
	
	\begin{table}[t]
		\centering
		\caption{Seed stability on the source-free Poisson--Boltzmann benchmark ($k=1$).
			We repeat the LNF-NO experiment across five random seeds and report the best test relative $\ell_2$ error and total training time.}
		\label{tab:seed_stability_pb1}
		\begin{sc}
			\renewcommand{\arraystretch}{1.1}
			\begin{tabular}{c c c}
				\toprule
				Seed & Best test rel.\ $\ell_2$ & Total training time (s) \\
				\midrule
				0 & 1.99e-02 & 671.30 \\
				1 & 1.88e-02 & 588.14 \\
				2 & 1.88e-02 & 559.15 \\
				3 & 2.06e-02 & 680.43 \\
				4 & 2.09e-02 & 678.09 \\
				\midrule
				Mean $\pm$ Std. & $1.98\times 10^{-2} \pm 8.8\times 10^{-4}$ & $635.42 \pm 51.35$ \\
				\bottomrule
			\end{tabular}
		\end{sc}
	\end{table}
	
	The results indicate that LNF-NO is reasonably robust to random initialization on this benchmark, with only modest variation in both final accuracy and total training time across the tested seeds.
	
\section{LNF-NO architecture details}
\label{app:lnfno_arch}

We categorize the architectures based on their input-output configurations:
\textbf{Single-Input Single-Output (SISO)}, mapping a single functional input to a single output field;
\textbf{Multiple-Input Single-Output (MISO)}, processing multiple functional inputs to predict a single output field;
and \textbf{Multiple-Input Multiple-Output (MIMO)}, processing multiple functional inputs to predict multiple coupled output fields.

All LNF-NO variants share the same multiplicative fusion structure:
\[
\hat{y} = \alpha \,( \mathcal{B}_L(z) \odot \mathcal{B}_N(z) ),
\]
where an encoder maps the input to latent features $z$, and $\alpha$ is a learned scalar.
The linear branch $\mathcal{B}_L$ is a pure linear map (affine transformation), while the nonlinear branch $\mathcal{B}_N$ uses GELU activations.
Unless otherwise noted, intermediate convolutional layers in the encoders and decoders are also followed by GELU activations.

\subsection{Architecture Configurations}

\paragraph{(A) Regular-domain SISO (Laplace, PB).}
Input: Boundary $g\in\mathbb{R}^{N_b}$. Output: Grid $u\in\mathbb{R}^{N^2}$.
\begin{itemize}
	\item \textbf{Encoder (1D convolutional neural network, CNN):} Four layers.
	\begin{itemize}
		\item \texttt{Conv1d(1, 96, k=9, s=1, p=4)}
		\item Three layers of \texttt{Conv1d(96, 96, k=9, s=2, p=4)}
	\end{itemize}
	\item \textbf{Branches (MLP):} Feature dim $d$ to grid dim $D=N^2$.
	\begin{itemize}
		\item $\mathcal{B}_L$: Linear $[d \to 256 \to D]$. Note that while mathematically equivalent to a single linear map, the two-layer structure (without intermediate activation) is maintained to align the latent width with the nonlinear branch for elementwise fusion and to provide additional optimization degrees of freedom.
		\item $\mathcal{B}_N$: MLP $[d \to 256 \to 256 \to D]$ with GELU activations.
	\end{itemize}
	\item \textbf{Decoder (2D CNN):}
	\begin{itemize}
		\item \texttt{Conv2d(1, 32, k=3, s=1, p=1)}
		\item \texttt{Conv2d(32, 32, k=3, s=1, p=1)}
		\item \texttt{Conv2d(32, 1, k=3, s=1, p=1)}
	\end{itemize}
\end{itemize}

\paragraph{(B) Regular-domain MISO (PB-Source).}
Input: Boundary $g$ and Source $f$. Output: Grid $u$.
\begin{itemize}
	\item \textbf{Encoders:}
	\begin{itemize}
		\item Boundary: Same as (A).
		\item Source: Four-layer 2D CNN with channels $1\to48$, followed by Adaptive Avg Pool to $8\times8$.
	\end{itemize}
	\item \textbf{Branches \& Decoder:} Identical to (A).
\end{itemize}

\paragraph{(C) Irregular-domain SISO (PB-FEM).}
Input: Boundary $g$. Output: Node vector $u \in \mathbb{R}^{N_p}$.
\begin{itemize}
	\item \textbf{Encoder:} Same 1D boundary encoder as (A).
	\item \textbf{Branches:} Output dimension $D=N_p$ (mesh nodes).
	\begin{itemize}
		\item $\mathcal{B}_L$: Linear $[d \to 256 \to D]$.
		\item $\mathcal{B}_N$: MLP $[d \to 256 \to 256 \to D]$ with GELU activations.
	\end{itemize}
	\item \textbf{Decoder:} None (direct node prediction).
\end{itemize}

\paragraph{(D) Regular-domain MIMO (PNP).}
Input: 3 Boundaries. Output: 3 Fields.
\begin{itemize}
	\item \textbf{Encoders:} Three independent 1D encoders, each with the same architecture as in (A).
	\item \textbf{Branches:} Output dim $D=3N_xN_y$.
	\item \textbf{Decoder:} 3-channel 2D CNN (\texttt{3->64->64->3}).
\end{itemize}

\paragraph{(E) 3D Poisson--Boltzmann (Regular-domain SISO).}
Input: Boundary $g \in \mathbb{R}^{N_b}$. Output: Grid $u \in \mathbb{R}^{N^3}$ ($N=33$).
Unlike the 2D cases, we omit the explicit 1D encoder for the boundary vector to reduce memory overhead, passing $g$ directly to the branches.
\begin{itemize}
	\item \textbf{Branches (MLP):} Map input dimension $N_b$ directly to grid dimension $D=N^3$.
	\begin{itemize}
		\item $\mathcal{B}_L$: Linear projection $[N_b \to 256 \to D]$.
		\item $\mathcal{B}_N$: MLP $[N_b \to 256 \to 256 \to D]$ with GELU activations.
	\end{itemize}
	\item \textbf{Decoder (3D CNN):}
	\begin{itemize}
		\item \texttt{Conv3d(1, 32, k=3, s=1, p=1)}
		\item \texttt{Conv3d(32, 32, k=3, s=1, p=1)}
		\item \texttt{Conv3d(32, 1, k=3, s=1, p=1)}
	\end{itemize}
\end{itemize}
	
	\section{Baselines}
	\label{app:baselines}
	
	To benchmark the performance of LNF-NO, we compare it against several representative neural operator baselines, including the Deep Operator Network (DeepONet) \cite{Lu_2021}, the Fourier Neural Operator (FNO) \cite{li2021fourier}, the U-shaped Neural Operator (UNO) \cite{Rahman2023UNO}, and Transolver \cite{Wu2024Transolver}.
	The hyperparameters selected for the baselines (e.g., basis dimension $p$ and frequency modes $k_{\max}$) align with standard configurations widely used in the community to ensure a fair comparison.
	All baseline models are trained using the same general protocol as LNF-NO:
	\begin{itemize}
		\item \textbf{Data Split:} A fixed $9{:}1$ training-to-testing ratio is used for all datasets.
		\item \textbf{Normalization:} Inputs and outputs are normalized using global statistics computed strictly on the training set.
		\item \textbf{Metrics:} All reported errors are relative $\ell_2$ norms computed on the decoded physical scale after inverse normalization.
		\item \textbf{Optimization:} We use the AdamW optimizer, with bias terms excluded from weight decay. Learning rates, batch sizes, and training schedules are selected according to the model type and task, while remaining consistent with the evaluation protocol described in Appendix~\ref{app:train_protocol}.
	\end{itemize}
	DeepONet models are trained with a larger epoch budget (5000 epochs), whereas the other baselines are trained for 500 epochs, reflecting their different optimization characteristics.
	
	\subsection{Deep Operator Network (DeepONet)}
	
	We employ the unstacked DeepONet architecture, which approximates the operator $\mathcal{G}$ by taking an input function $u$ (via a discrete sensor vector) and a query coordinate $y$ to predict the solution value $\mathcal{G}(u)(y)$.
	The network consists of two sub-networks: a \emph{Branch Net} encoding the input function and a \emph{Trunk Net} encoding the coordinates.
	The final prediction is given by the inner product of their outputs (plus an optional bias):
	\[
	\hat{u}(y) = \sum_{k=1}^{p} b_k(u) \cdot t_k(y) + \beta,
	\]
	where $\{b_k\}_{k=1}^p$ and $\{t_k\}_{k=1}^p$ are the $p$-dimensional outputs of the branch and trunk networks, respectively.
	For all tasks, we set the basis dimension to $p=1024$.
	The Branch and Trunk networks are implemented as 5-layer MLPs with a hidden width of 256 and \texttt{tanh} activations.
	
	\paragraph{(A) Regular-domain SISO (Laplace, PB).}
	The branch net input is the boundary vector $g \in \mathbb{R}^{N_b}$, and the trunk net input is the coordinate $x \in \mathbb{R}^2$.
	\begin{itemize}
		\item Branch: $\texttt{MLP}(N_b \to 1024)$ with hidden width 256.
		\item Trunk: $\texttt{MLP}(2 \to 1024)$ with hidden width 256.
	\end{itemize}
	
	For the 3D PB task, the trunk net takes 3D coordinates $(x,y,z)$ as input. The architecture parameters remain consistent ($p=1024$, width=256).
	
	\paragraph{(B) Regular-domain MISO (PB-Source).}
	For tasks with multiple inputs (boundary $u_b \in \mathbb{R}^{N_b}$ and source $f \in \mathbb{R}^{N^2}$), we employ a multi-branch fusion strategy (similar to MIONet \cite{MIONet}).
	Separate branch nets encode $u_b$ and $f$, and their embeddings are fused via elementwise multiplication before the dot product with the trunk:
	\[
	b(u_b, f) = b^{(u)}(u_b) \odot b^{(f)}(f) \in \mathbb{R}^{p}.
	\]
	\begin{itemize}
		\item Branch 1: $\texttt{MLP}(N_b \to 1024)$.
		\item Branch 2: $\texttt{MLP}(N^2 \to 1024)$.
		\item Trunk: $\texttt{MLP}(2 \to 1024)$.
	\end{itemize}
	
	\paragraph{(C) Regular-domain MIMO (PNP).}
	For the coupled PNP system, we use a single concatenated branch net. The trunk net is modified to have three separate output heads to predict the three fields $(\phi, c_+, c_-)$ simultaneously:
	\[
	\hat{u}^{(m)}(x) = \sum_{k=1}^{p} b_k(\mathbf{u}_{\text{bc}}) \cdot t^{(m)}_k(x), \quad m \in \{\phi, c_+, c_-\}.
	\]
	\begin{itemize}
		\item Branch: $\texttt{MLP}(3N_b \to 1024)$.
		\item Trunk: $\texttt{MLP}(2 \to 3 \times 1024)$.
	\end{itemize}
	
	\subsection{Fourier Neural Operator (FNO)}
	
	We adopt the standard Fourier Neural Operator architecture.
	The model processes the input $a(x)$ through three main stages:
	\begin{enumerate}
		\item \textbf{Encoder (Lifting):} The input is projected pointwise to a high-dimensional channel space $v_0(x) \in \mathbb{R}^{d_h}$ via a linear layer or shallow MLP.
		\item \textbf{Fourier Integral Layers:} The latent representation is updated through a sequence of $L$ layers. The update rule for the $l$-th layer is:
		\[
		v_{l+1}(x) = \sigma\left( W_l v_l(x) + (\mathcal{F}^{-1} \circ R_l \circ \mathcal{F})(v_l)(x) \right),
		\]
		where $\sigma$ is a nonlinear activation function, $W_l$ is a pointwise linear transform (residual connection), $\mathcal{F}$ denotes the FFT, and $R_l$ is a learnable complex weight tensor that mixes the lowest $k_{\max}$ frequency modes.
		\item \textbf{Decoder (Projection):} The final features $v_L(x)$ are mapped to the target dimension using a pointwise MLP decoder.
	\end{enumerate}
	
	\paragraph{Input lifting strategy.}
	Since FNO requires grid-structured inputs, strictly lower-dimensional inputs (e.g., boundary conditions) are mapped to the full spatiotemporal grid prior to the lifting stage.
	We employ standard techniques consistent with the literature: \emph{domain replication} for scalar/vector inputs, \emph{zero-padding} (placing boundary values on the perimeter with zero-filled interiors) for boundary fields, or learned \emph{MLP projection}. The resulting fields are concatenated with coordinate grids.
	
	\paragraph{Hyperparameters.}
	We set the network depth to $L=4$ for all tasks.
	Consistent with our training protocol, we employ ReLU activation for all FNO baselines.
	Specific structural parameters are:
	
	\begin{itemize}
		\item \textbf{2D steady-state (Laplace, PB, Darcy, PNP):}
		We use a 2D FNO with $k_{\max}=16$ modes per direction and width $d_h=64$.
		The PNP model uses a 5-channel input (via zero-padding) and predicts 3 fields simultaneously.
		
		\item \textbf{Burgers (1D time-dependent):}
		Modeled as a 2D FNO on the $(x,t)$ domain.
		Configuration: $k_{\max}=16$, $d_h=64$.
		
		\item \textbf{3D Poisson--Boltzmann:}
		Modeled as a 3D FNO on the $(x,y,z)$ domain.
		Configuration: $k_{\max}=8$ modes per direction, width $d_h=32$.
	\end{itemize}
	
	\subsection{U-shaped Neural Operator (UNO)}
	\label{app:uno}
	
	We adopt the U-shaped Neural Operator (UNO)~\cite{Rahman2023UNO} as a neural operator baseline.
	UNO is a multi-resolution neural operator architecture that combines global operator learning with a U-shaped encoder--decoder representation.
	Unlike the standard FNO, which operates at a fixed grid resolution, UNO maps features across multiple grid resolutions, allowing the model to capture both global interactions and fine-scale spatial structures.
	
	The architecture consists of three main components:
	\begin{enumerate}
		\item \textbf{Input lifting:}
		The grid-based input field is concatenated with spatial or spatiotemporal coordinate channels and projected pointwise to a higher-dimensional feature representation.
		
		\item \textbf{Multi-resolution operator blocks:}
		The latent representation is processed by a hierarchy of operator blocks arranged along an encoder--decoder path.
		The encoder path maps the representation to progressively coarser grids, while the decoder path maps it back to the target output resolution.
		Each block applies a learned integral operator, implemented in the Fourier domain, together with pointwise channel mixing.
		The encoder--decoder hierarchy transfers features between different spatial resolutions, and skip connections pass high-resolution representations from the encoder path to the decoder path, mitigating information loss caused by the coarse-grid bottleneck.
		
		\item \textbf{Output projection:}
		After the representation is mapped to the target grid resolution, the final feature map is projected to the target solution field by a pointwise MLP.
	\end{enumerate}
	
	\paragraph{Input representation.}
	Following the FNO setting, all inputs are represented as grid-based fields.
	For coefficient-driven problems such as Darcy flow, the coefficient field is used directly as the input field.
	For boundary-driven problems such as the Poisson--Boltzmann and Laplace equations, the lower-dimensional boundary data are first embedded into grid-based input fields using the same boundary-to-grid preprocessing strategy as in the corresponding FNO baseline.
	For source-driven Poisson--Boltzmann problems, the grid-embedded boundary field and the discretized source field are concatenated as input channels.
	For multi-field systems such as the Poisson--Nernst--Planck equation, multiple grid-embedded boundary fields are used as input channels together with spatial coordinate information, and the model predicts multiple solution fields simultaneously.
	
	\paragraph{Hyperparameters.}
	For each task group, we report the maximum retained Fourier modes and the base channel width.
	Whenever possible, these settings are kept consistent with the corresponding FNO baseline so that the comparison mainly reflects architectural differences rather than hyperparameter changes.
	The configurations are summarized as follows:
	\begin{itemize}
		\item \textbf{2D steady-state problems (Darcy, Poisson--Boltzmann, Laplace):}
		We use a 2D UNO with up to $k_{\max}=16$ retained Fourier modes per spatial dimension and base width $d_h=64$.
		
		\item \textbf{2D Poisson--Boltzmann with source:}
		We use the same 2D UNO configuration with up to $k_{\max}=16$ retained Fourier modes per spatial dimension and base width $d_h=64$.
		The input contains both the grid-embedded boundary information and the discretized source field.
		
		\item \textbf{2D Poisson--Nernst--Planck (PNP):}
		We use a 2D UNO with up to $k_{\max}=16$ retained Fourier modes per spatial dimension and base width $d_h=64$.
		The input consists of the grid-embedded boundary fields for $\phi$, $c_+$, and $c_-$, together with spatial coordinate information.
		The model predicts the three solution fields simultaneously.
		
		\item \textbf{Burgers equation (1D time-dependent):}
		The problem is represented on the $(x,t)$ domain and modeled using a 2D UNO with up to $k_{\max}=16$ retained Fourier modes per dimension and base width $d_h=64$.
		
		\item \textbf{3D Poisson--Boltzmann:}
		We use a 3D UNO on the $(x,y,z)$ domain with up to $k_{\max}=8$ retained Fourier modes per spatial dimension and base width $d_h=32$.
	\end{itemize}
	
	\subsection{Transolver}
	\label{app:transolver}
	
	We adopt Transolver~\cite{Wu2024Transolver} as an attention-based neural operator baseline.
	Unlike DeepONet, which evaluates the solution at query coordinates through branch--trunk inner products, and unlike FNO, which relies on spectral convolution on regular grids, Transolver treats the computational domain as a set of tokens and updates token features through physics-attention blocks.
	For structured-grid problems, the tokens correspond to points on the target Cartesian grid.
	For irregular-domain finite-element problems, the tokens correspond to FEM mesh nodes.
	
	\paragraph{Backbone.}
	For the two-dimensional structured-grid benchmarks, we use a structured-mesh Transolver backbone.
	Given token coordinates $x_i\in\mathbb{R}^2$ and optional functional features $f_i$, the input token feature is first formed by concatenating positional information and functional features, followed by an MLP preprocessing layer.
	When \texttt{unified\_pos=True}, the coordinate input is replaced by a unified positional representation based on a reference grid of size $\texttt{ref}\times\texttt{ref}$.
	The resulting token features are passed through stacked Transolver blocks.
	Each block consists of layer normalization, a physics-attention module, a residual connection, and a feedforward MLP.
	The last block projects the hidden feature to the output field dimension.
	
	For most two-dimensional regular-grid Transolver baselines, we use
	\[
	n_{\mathrm{hidden}}=128,\qquad
	n_{\mathrm{layers}}=4,\qquad
	n_{\mathrm{head}}=8,\qquad
	\texttt{slice\_num}=64,\qquad
	\texttt{mlp\_ratio}=1,\qquad
	\texttt{dropout}=0.
	\]
	We use GELU activations and set the reference-grid parameter to $\texttt{ref}=8$.
	The structured-grid models use \texttt{unified\_pos=True} unless otherwise specified.
	
	\paragraph{(A) Regular-domain SISO benchmarks.}
	For boundary-to-grid tasks such as Laplace and source-free Poisson--Boltzmann equations, the input boundary vector
	$g\in\mathbb{R}^{N_b}$ is first mapped to grid-wise functional features through a global lifting MLP:
	\[
	g \mapsto F_g \in \mathbb{R}^{N^2\times d_f}.
	\]
	In our implementation, this lifting module is
	\[
	\texttt{MLP}(N_b \to 256 \to 256 \to N^2 d_f),
	\]
	with GELU activations and $d_f=1$ for these SISO tasks.
	The lifted feature $F_g$ is concatenated with positional features and passed to the structured 2D Transolver backbone, which predicts the grid solution
	$u\in\mathbb{R}^{N^2}$.
	The same boundary-lift strategy is used for the Laplace and source-free PB benchmarks, with task-dependent grid sizes.
	
	\paragraph{(B) Burgers equation.}
	For the Burgers benchmark, the input initial condition $u_0(x)$ is represented on the spatial grid and broadcast along the temporal direction to form a full space--time functional field on the $(x,t)$ grid.
	The resulting field, together with the space--time coordinates, is processed by the structured 2D Transolver backbone.
	The model outputs the solution trajectory $u(x,t)$ on the full $N_x\times N_t$ space--time grid.
	This treatment follows the same token-based structured-grid formulation as the steady two-dimensional tasks, but with the two token coordinates corresponding to space and time.
	
	\paragraph{(C) Darcy flow benchmarks.}
	For Darcy flow, the input coefficient field $a(x,y)$ is already defined on the target grid.
	Therefore, no boundary-to-grid lifting is required.
	The coefficient field is used as a grid-wise functional input channel and concatenated with positional information before entering the structured 2D Transolver backbone.
	We use this formulation for both continuous- and discontinuous-coefficient Darcy benchmarks, with grid sizes matching the corresponding datasets.
	
	\paragraph{(D) Regular-domain MISO benchmark: PB with source.}
	For the source-driven Poisson--Boltzmann benchmark, the input consists of a boundary trace $u_b\in\mathbb{R}^{N_b}$ and a source field $f\in\mathbb{R}^{N^2}$.
	The source field is already a full-grid input and is kept as one functional channel.
	The boundary trace is mapped to additional grid-wise channels through a global lifting MLP:
	\[
	u_b \mapsto F_{u_b}\in\mathbb{R}^{N^2\times d_b}.
	\]
	In our implementation, we use $d_b=4$ and a lifting hidden width of 256.
	The final functional input to Transolver is the concatenation
	\[
	F = [\, f,\; F_{u_b}\,]\in\mathbb{R}^{N^2\times (1+d_b)}.
	\]
	This concatenated grid-wise feature is then processed by the structured 2D Transolver backbone to predict the solution field $u$.
	
	\paragraph{(E) Regular-domain MIMO benchmark: PNP.}
	For the coupled PNP system, the input contains three boundary traces
	$(g_\phi,g_{c_+},g_{c_-})$, and the output contains three fields
	$(\phi,c_+,c_-)$.
	To construct a grid-compatible input for Transolver, we place the three boundary traces on the perimeter of a full grid and fill the interior values with zeros, producing three full-grid input channels.
	These channels are used as the functional input to the structured 2D Transolver backbone.
	The final projection layer is modified to output three channels simultaneously, corresponding to the three coupled fields.
	The training loss is the mean of the decoded physical-scale relative $\ell_2$ errors over the three output fields, consistent with the evaluation protocol.
	
	\paragraph{(F) Irregular-domain PB benchmarks.}
	For irregular finite-element PB benchmarks, we use an irregular-mesh Transolver variant.
	Each FEM node is treated as a token with coordinate $x_i\in\mathbb{R}^2$.
	The boundary vector $g\in\mathbb{R}^{N_b}$ is lifted to node-wise functional features through a global MLP:
	\[
	g \mapsto F_g \in \mathbb{R}^{N_p\times d_f},
	\]
	where $N_p$ is the number of FEM nodes.
	The node coordinates and lifted node-wise features are then passed to the irregular Transolver backbone to predict the nodal solution vector
	$u\in\mathbb{R}^{N_p}$.
	For these irregular-domain benchmarks, we use
	\[
	n_{\mathrm{hidden}}=128,\qquad
	n_{\mathrm{layers}}=8,\qquad
	n_{\mathrm{head}}=8,\qquad
	\texttt{slice\_num}=64,\qquad
	\texttt{ref}=8,\qquad
	\texttt{unified\_pos=False},
	\]
	with GELU activations, \texttt{mlp\_ratio}=1, and dropout set to zero.
	
	\paragraph{(G) 3D Poisson--Boltzmann benchmark.}
	For the 3D PB benchmark, we use a structured 3D Transolver variant.
	The unique boundary vector $g\in\mathbb{R}^{N_b}$ is first lifted to a full-grid functional field on the $N\times N\times N$ grid.
	The lifted feature is then combined with three-dimensional positional information and processed by a 3D structured Transolver backbone.
	Compared with the 2D structured-grid case, the 3D model uses a lighter attention configuration to control memory usage while keeping the same overall token-based design.
	The final output is the full volumetric solution field $u\in\mathbb{R}^{N^3}$.
	
	\paragraph{Training setup.}
	All Transolver baselines are trained using the same general evaluation protocol as the other methods: a fixed train/test split, train-only normalization, AdamW optimization, StepLR with $\gamma=1.0$, and relative $\ell_2$ errors computed after decoding predictions back to the physical scale.
	For multi-field PNP outputs, the reported error is the average relative $\ell_2$ error over the three physical fields.
	
\section{Visualization of predictions}
\label{app:vis}

In this section, we provide qualitative visualizations of the LNF-NO predictions on unseen test samples. 
Unless otherwise stated, the columns display:
(1) The \textbf{Reference Solution} (Ground Truth) obtained from the numerical solver;
(2) The \textbf{Prediction} output by our LNF-NO model;
(3) The pointwise \textbf{Absolute Error} map ($|u - \hat{u}|$).
For tasks with spatially varying coefficients or source terms, the input field is also visualized in the first column.

\textbf{Note:} Visualizations for the 3D Poisson--Boltzmann benchmark are presented in the main text (see the corresponding Figure~\ref{fig:pb3d_main}) and are therefore omitted here to avoid redundancy.

\begin{figure}[htbp]
	\centering
	\includegraphics[width=0.9\linewidth]{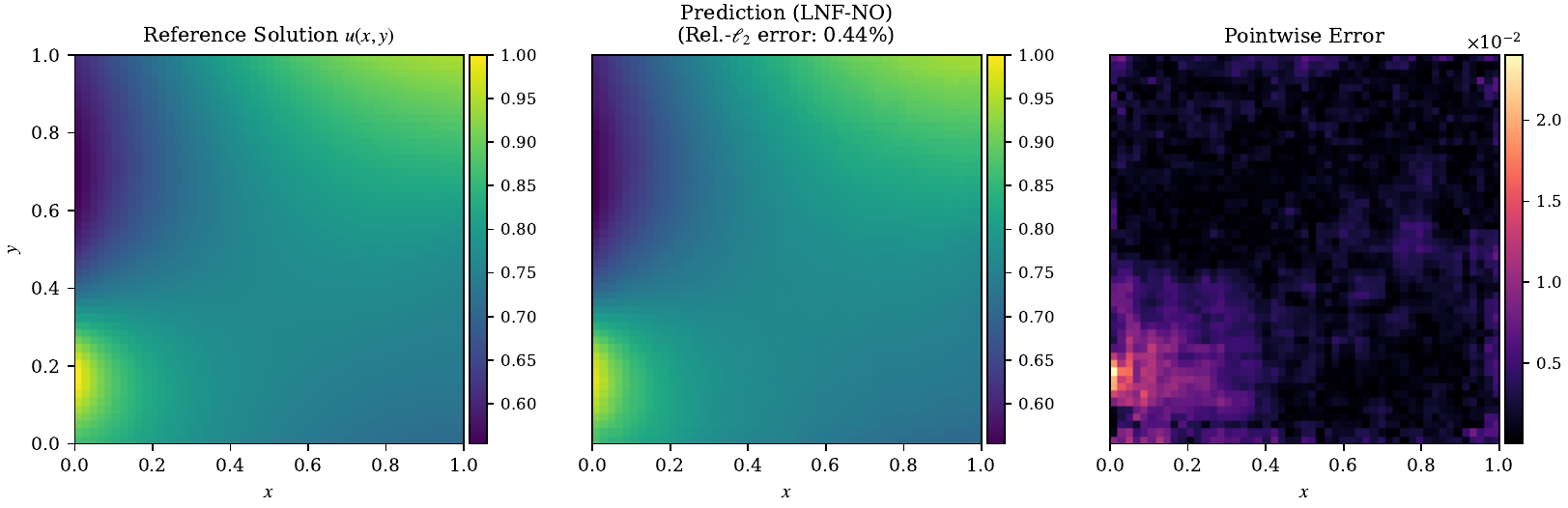}
	\caption*{\textbf{(a) Laplace Equation:} Prediction of the harmonic function on a regular grid.}
	\vspace{1em}
	\includegraphics[width=0.9\linewidth]{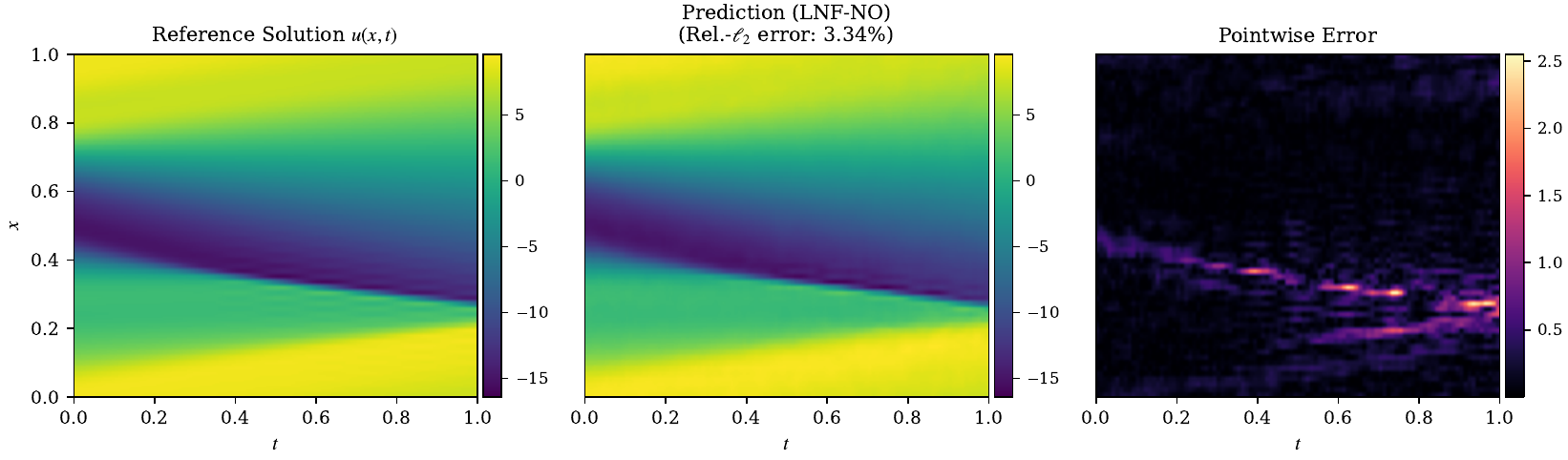}
	\caption*{\textbf{(b) Burgers Equation:} A snapshot of the wave propagation. The model sharply resolves the shock front with minimal oscillation.}
	
	\caption{\textbf{Standard Benchmarks.} Visualizations for (a) the steady-state 2D Laplace equation and (b) the time-dependent 1D Burgers equation.}
	\label{fig:vis_basic}
\end{figure}

\clearpage

\begin{figure}[htbp]
	\centering
	\includegraphics[width=0.9\linewidth]{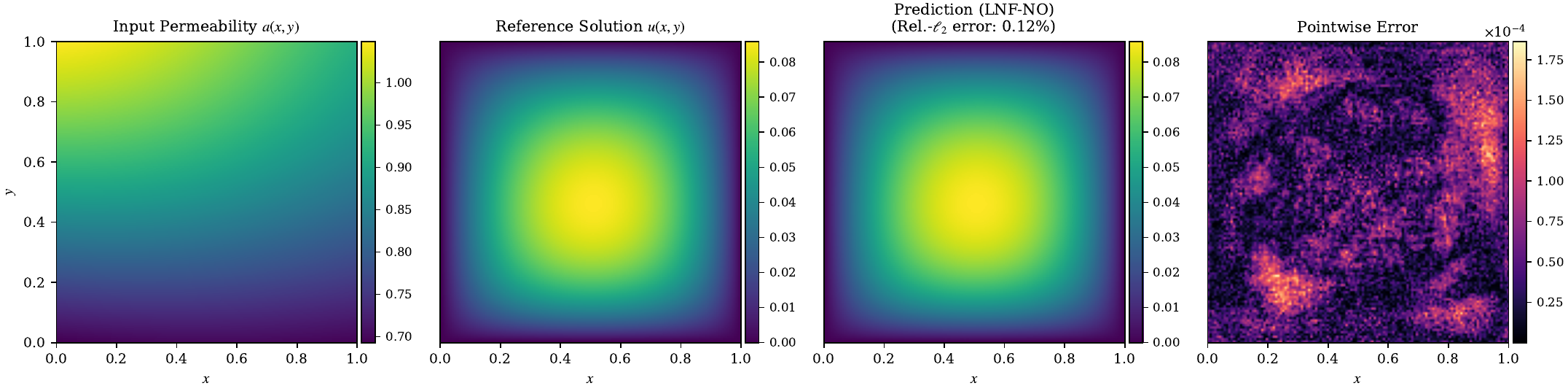}
	\caption{\textbf{Darcy Flow (Continuous Coefficients).} From left to right: Input Permeability $a(x,y)$, Reference Solution $u(x,y)$, Prediction, and Absolute Error.}
	\label{fig:vis_darcy_cont}
\end{figure}

\begin{figure}[htbp]
	\centering
	\includegraphics[width=0.9\linewidth]{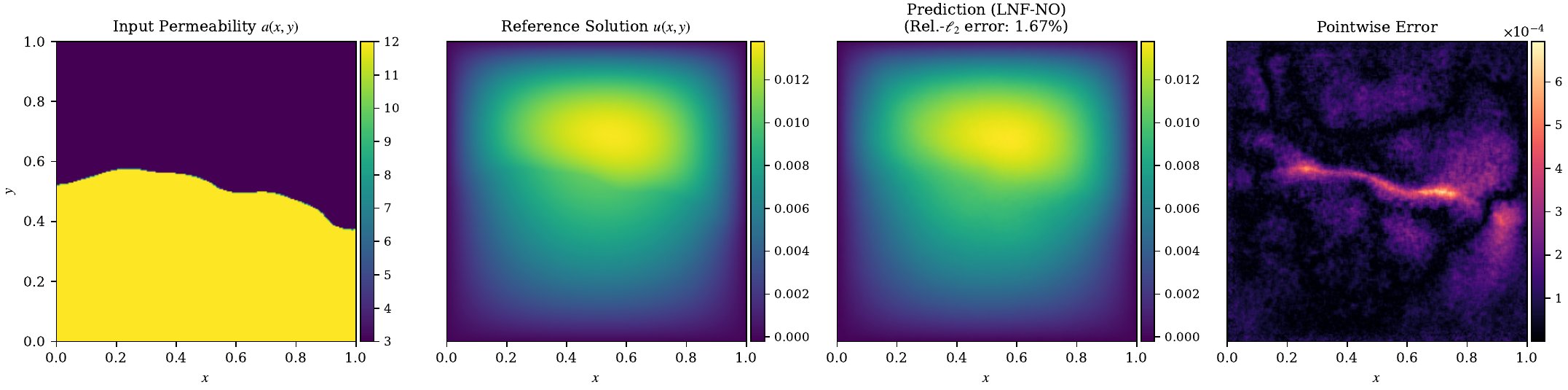}
	\caption{\textbf{Darcy Flow (Discontinuous Coefficients).} The model effectively handles sharp transitions in the solution field caused by jump coefficients.}
	\label{fig:vis_darcy_dis}
\end{figure}

\clearpage

\begin{figure}[htbp]
	\centering
	\includegraphics[width=0.9\linewidth]{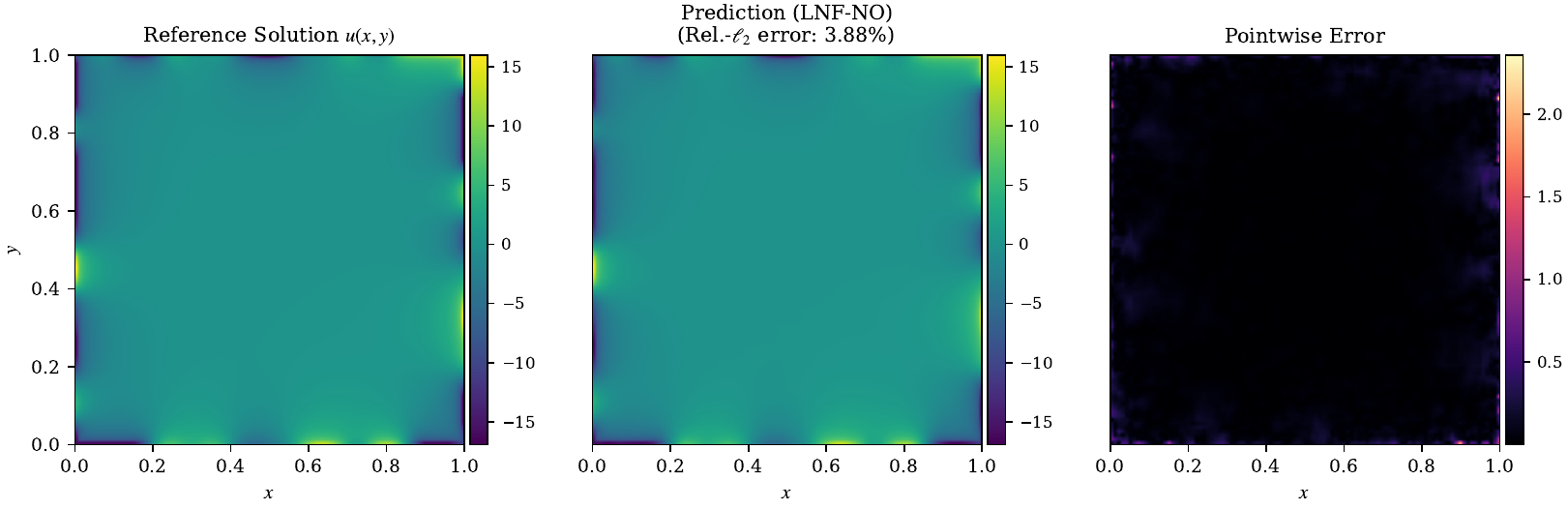}
	\caption{\textbf{Poisson--Boltzmann (High Nonlinearity $k=100$).} The model captures the sharp boundary layers induced by the strong sinh nonlinearity.}
	\label{fig:vis_pb_k100}
\end{figure}

\begin{figure}[htbp]
	\centering
	\includegraphics[width=0.9\linewidth]{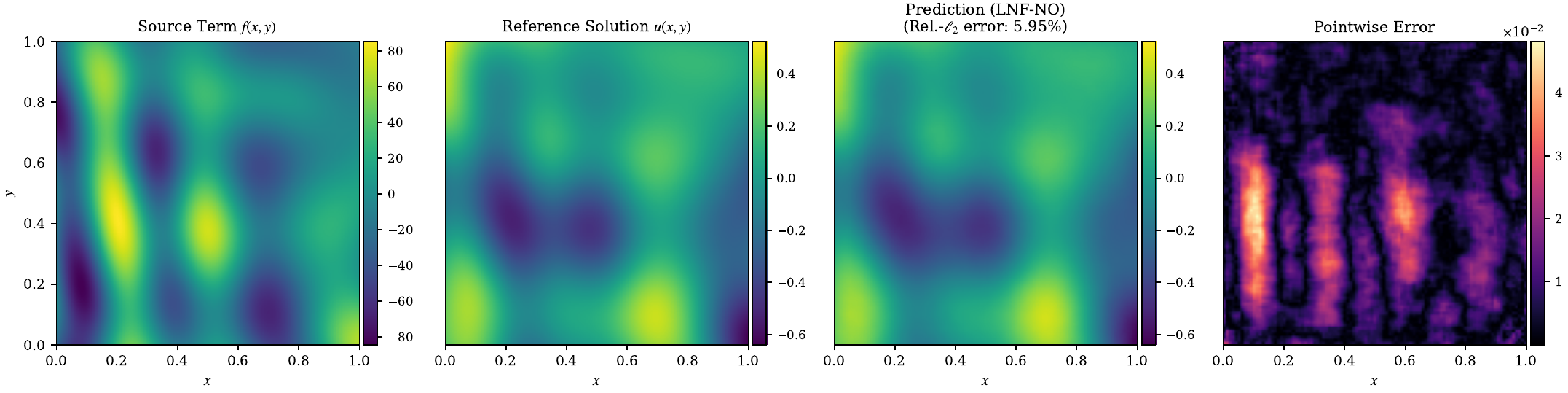}
	\caption{\textbf{Poisson--Boltzmann (Source-Driven).} Includes the variable source term $f(x,y)$ (first column) as input.}
	\label{fig:vis_pb_source}
\end{figure}

\clearpage

\begin{figure}[htbp]
	\centering
	\includegraphics[width=0.9\linewidth]{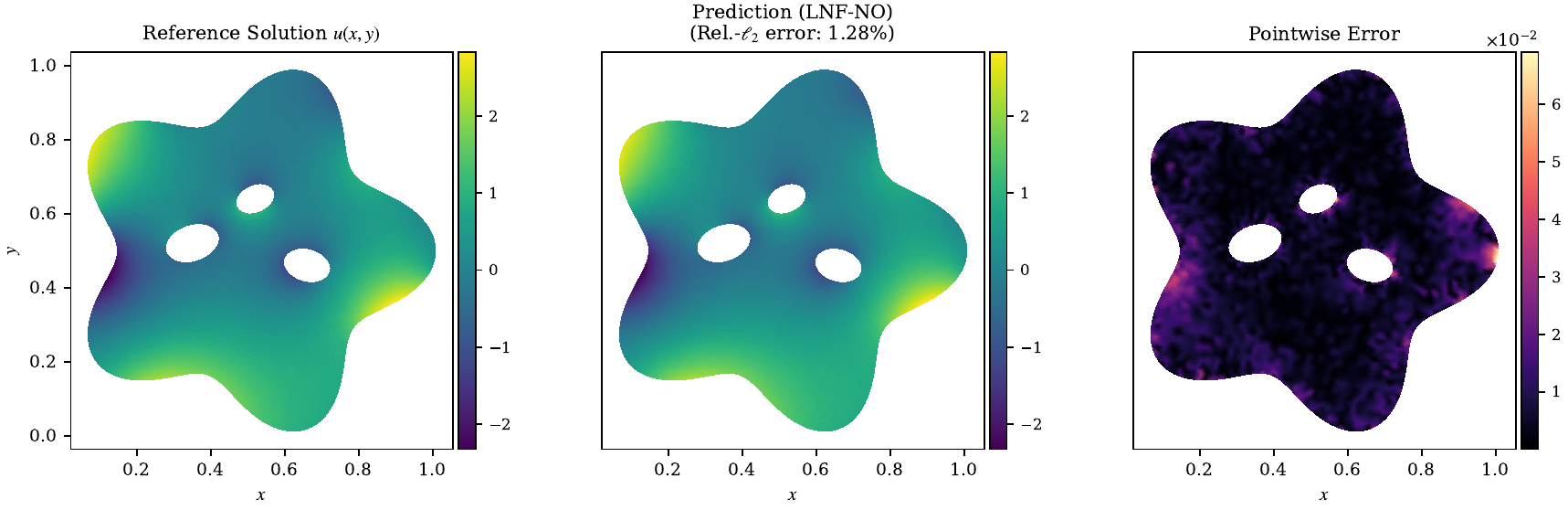}
	\caption{\textbf{Geometric Generalization (Irregular PB).} 
		Predictions on complex non-simply-connected domains, including the ``Flower'' shape with elliptical holes. 
		Since this task uses the node-based architecture, results are plotted directly on the finite-element mesh triangulation. 
		LNF-NO accurately captures the solution field along the intricate curved boundaries, confirming its capability to handle unstructured data and complex topologies without regular grid interpolation.}
	\label{fig:vis_irregular}
\end{figure}

\clearpage

\begin{figure}[htbp]
	\centering
	\includegraphics[width=0.9\linewidth]{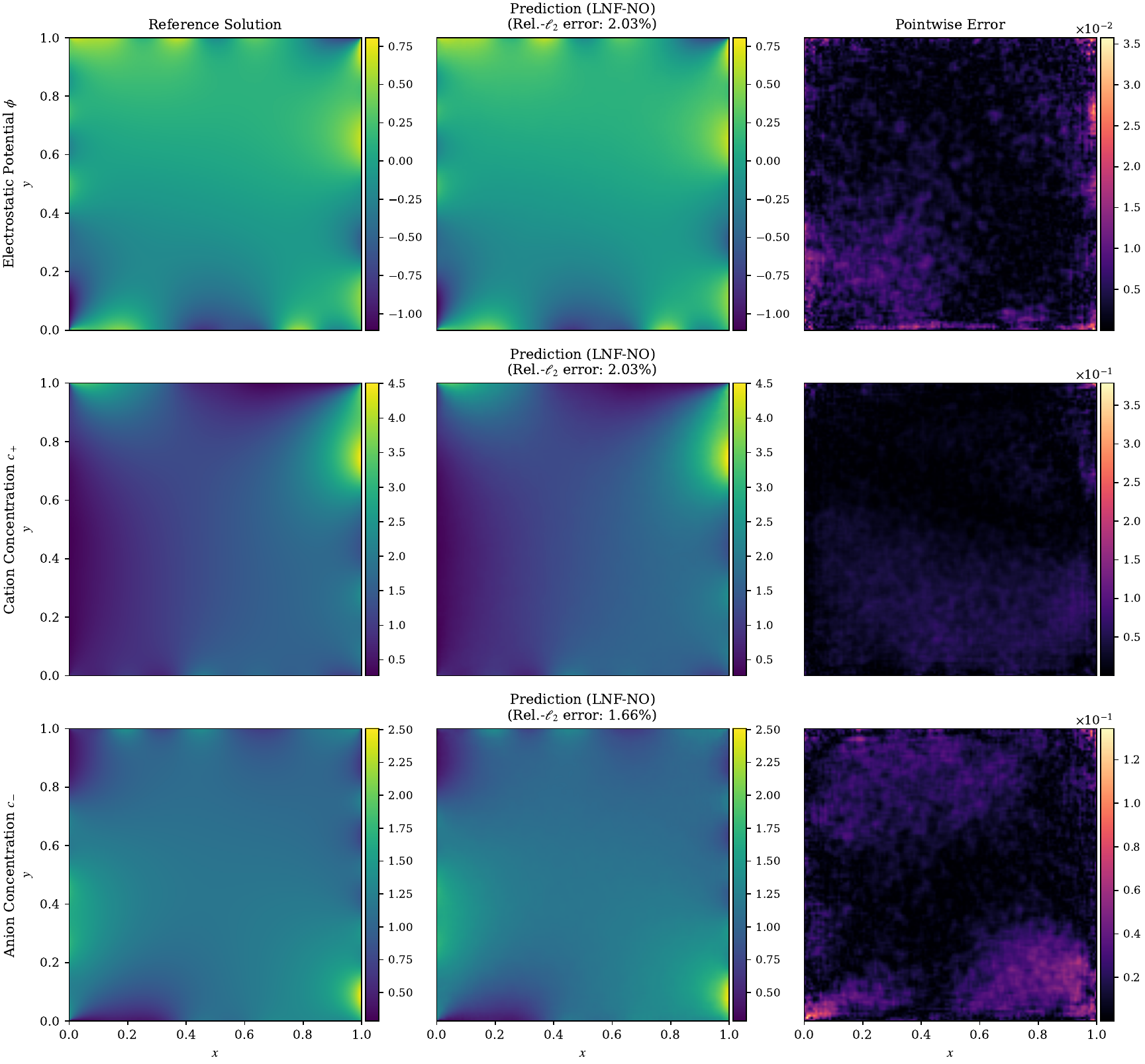}
	\caption{\textbf{Poisson--Nernst--Planck (PNP) System.} 
		Simultaneous prediction of three strongly coupled fields: Electrostatic Potential $\phi$ (top), Cation Concentration $c_+$ (middle), and Anion Concentration $c_-$ (bottom).
		The input boundary conditions (not shown) drive complex internal distributions. 
		The error maps (right column) demonstrate that LNF-NO maintains physical consistency across the variables and accurately resolves the sharp concentration gradients (Debye layers) near the boundaries, which are typical in electro-diffusion problems.}
	\label{fig:vis_pnp}
\end{figure}

\end{document}